\definecolor{customgray}{rgb}{0.6, 0.6, 0.6}
\newcommand{\cmark}{\ding{51}}%
\newcommand{\xmark}{\ding{55}}%
\newcommand{\latinabv}[1]{\emph{#1}}
\newcommand{\ie}{\latinabv{i.e.}}
\newcommand{\eg}{\latinabv{e.g.}}
\newcommand{\cf}{\latinabv{cf.}\xspace}
\newcommand{\etal}{\latinabv{et al.}\xspace}
\newcommand{\ours}{\mbox{ManiPose}\xspace}
\newcommand{\assump}{Asm.}
\newcommand{\myparagraph}[1]{\smallskip\noindent\textbf{#1}}
\newcommand{\inputs}{\mathrm{x}}
\newcommand{\score}{\gamma}
\newcommand{\pose}{\mathrm{p}}
\newcommand{\inputsG}{\inputs^G}
\newcommand{\poseG}{\pose^G}
\newcommand{\hpose}{\hat{\pose}}
\newcommand{\refpose}{\mathrm{u}}
\newcommand{\refposes}{\refpose'}
\newcommand{\mov}{\textrm{m}}
\newcommand{\manif}{\mathcal{M}}
\newcommand{\poseti}[1]{\poseG_{t,#1}}
\newcommand{\prob}{\mathrm{P}}
\newcommand{\man}{\mathcal{M}}
\newcommand{\R}{\mathbb{R}}
\newcommand{\risk}{\mathcal{R}}
\newcommand{\riskwta}[1]{\risk_\text{WTA}^{#1}}
\newcommand{\K}{\mathcal{K}}
\newcommand{\fwta}{f_\text{WTA}}
\newcommand{\fmse}{f_\text{MSE}}
\newcommand{\vcell}[1]{\mathcal{V}^k(#1)}
\newcommand{\vcellf}{\vcell{\fwta(\inputs)}}
\newcommand{\vcellopt}{\vcell{\fwta^*(\inputs)}}
\newcommand{\vcellg}{\vcell{g}}
\newcommand{\E}{\mathbb{E}}
\newcommand{\X}{\mathcal{X}}
\newcommand{\Poses}{\mathcal{P}}
\newcommand*\diff{\mathop{}\!\mathrm{d}}
\newcommand{\SOt}{\mathrm{SO}(3)}
\newcommand{\loss}{\mathcal{L}}
\newcommand{\lossmu}{\loss_{\text{score}}}
\newcommand{\losswta}{\loss_{\text{wta}}}
\newcommand{\CE}{\mathcal{H}}
\pgfplotsset{compat=1.16}
\tikzset{
    cross/.style={cross out, draw, minimum size=2*(#1-\pgflinewidth), inner sep=0pt, outer sep=0pt},
    cross/.default={3pt}
}
\definecolor{mlpcol}{RGB}{172.95145245, 23.4404721, 89.35577}
\definecolor{regcol}{RGB}{89, 130, 185}
\definecolor{constcol}{RGB}{62, 84, 113}
\definecolor{mhcol}{RGB}{254, 178, 63}
\definecolor{manicol}{RGB}{0,100,0}
\newcommand{\markermlp}{\raisebox{0.5pt}{\tikz{\node[draw=mlpcol,scale=1,cross, very thick](){};}}}
\newcommand{\markerconst}{\raisebox{0.5pt}{\tikz{\node[draw=constcol,scale=1,cross, very thick](){};}}}
\newcommand{\markertriangle}{\raisebox{0.5pt}
{\tikz{\node[draw=manicol,scale=0.3,regular polygon, regular polygon sides=3,fill=manicol,rotate=0](){};}}}
\newcommand{\markermsemin}{\raisebox{0.5pt}{\tikz{\node[draw=mlpcol,scale=0.5,circle,fill=mlpcol](){};}}}
\newcommand{\markerstar}{{\Large $\textcolor{manicol}{\star}$}}
\DeclareMathOperator*{\argmin}{arg\,min}
\theoremstyle{plain}
\newtheorem{theorem}{Theorem}[section]
\newtheorem{proposition}[theorem]{Proposition}
\newtheorem{corollary}[theorem]{Corollary}
\theoremstyle{definition}
\newtheorem{definition}[theorem]{Definition}
\newtheorem{assumption}[theorem]{Assumption}
\theoremstyle{remark}
\newcommand{\VL}[1]{\textcolor{black}{#1}}
\newcommand{\CR}[1]{\textcolor{black}{#1}}
\newcommand\RM[1]{\textcolor{black}{#1}}
\title{ManiPose: Manifold-Constrained Multi-Hypothesis 3D Human Pose Estimation}
\author{%
Cédric Rommel$^1$\\
\and
Victor Letzelter$^{1,3}$\\
\and
Nermin Samet$^1$\\
\and
Renaud Marlet$^{1,5}$\\
\and
Matthieu Cord$^{1,2}$\\
\and
Patrick Pérez$^1$ \\
\and
Eduardo Valle$^{1,4}$\\  
\and
$^1$Valeo.ai, Paris, France \;
$^2$Sorbonne Université, Paris, France \\
$^3$LTCI, Télécom Paris, Institut Polytechnique de Paris, France \\
$^4$Recod.ai Lab, School of Electrical and Computing Engineering, University of Campinas, Brazil \\
$^5$LIGM, Ecole des Ponts, Univ Gustave Eiffel, CNRS, Marne-la-Vallee, France
}
\begin{document}

\maketitle

\begin{abstract}
    We propose \emph{ManiPose}, a manifold-constrained multi-hypothesis model for human-pose 2D-to-3D lifting.
    We provide theoretical and empirical evidence that, due to the depth ambiguity inherent to monocular 3D human pose estimation, traditional regression models suffer from pose-topology consistency issues, which standard evaluation metrics (MPJPE, P-MPJPE and PCK) fail to assess.
    ManiPose addresses depth ambiguity by proposing multiple candidate 3D poses for each 2D input, each with its estimated plausibility. Unlike previous multi-hypothesis approaches, ManiPose forgoes generative models, greatly facilitating its training and usage.
    By constraining the outputs to lie on the human pose manifold, ManiPose guarantees the consistency of all hypothetical poses, in contrast to previous works.
    We showcase the performance of ManiPose on real-world datasets, where it outperforms state-of-the-art models in pose consistency by a large margin while being very competitive on the MPJPE metric.
\end{abstract}

\section{Introduction}
\label{sec:intro}

We propose \emph{\ours}, a novel approach for human-pose 2D-to-3D lifting. \ours directly addresses the depth ambiguity inherent to monocular 3D human pose estimation by being both multi-hypothesis and manifold-constrained, thus avoiding pose consistency issues, which plague traditional regression-based methods. Unlike previous multi-hypothesis approaches, \ours forgoes the use of costly generative models, while still estimating the plausibility of each hypothesis.

Monocular 3D human pose estimation (HPE) is a challenging learning problem that aims to predict 3D human poses given an image or a video from a single camera. Often, the problem is split into two successive steps: first 2D human pose estimation, then 2D-to-3D lifting. Such separation is favorable because 2D-HPE is much more mature, leading to better overall results. Due to depth ambiguity and occlusions, 2D-to-3D lifting is intrinsically ill-posed: multiple 3D poses correspond to the same projection observed in 2D. Despite that, the field has experienced fast developments, with substantial improvements in terms of mean-per-joint-prediction error (MPJPE) and derived metrics (\eg, P-MPJPE, PCK)~\cite{zhang_mixste_2022, zheng_3d_2021, shan_p-stmo_2022, wang_motion_2020}.

\begin{figure}[t]
\noindent\begin{minipage}{0.55\textwidth}
    \centering
    \includegraphics[width=\textwidth]{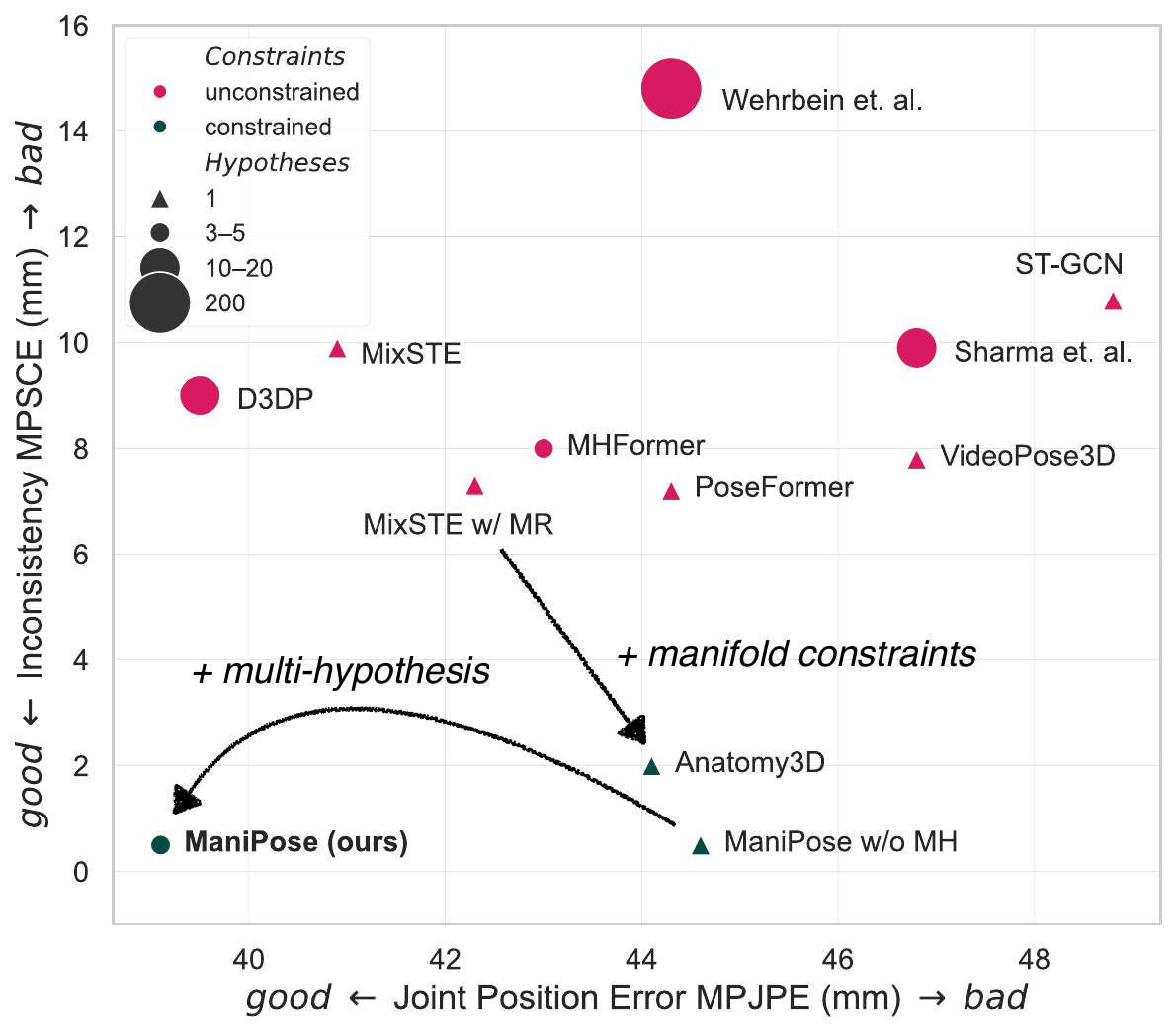}
\end{minipage}~~
\hfill
\begin{minipage}{0.4\textwidth}
    \captionof{figure}{
        \textbf{Optimizing both 3D position and pose consistency requires combining constraints and multiple hypotheses.} Results from \cref{tab:consistency,tab:ablation}. Previous unconstrained methods provide inconsistent poses (top). Regularization (MR) and disentanglement constraints improve consistency, but degrade joint position error (bottom-right).
        Ours is the only method
        that achieves both good joint error and consistency,
        thanks to a combination of disentanglement and a few hypotheses (see circles sizes).
    }
    \label{fig:teaser}
\end{minipage}
\end{figure}

However, recent studies~\cite{wehrbein2021probabilistic,holmquist2023diffpose,rommel2023diffhpe} noted that poses predicted by state-of-the-art models fail to respect basic invariances of human morphology, such as bilateral sagittal symmetry, or constant length across time of rigid body segments connecting the joints. Not only do we address those concerns with \ours (see \cref{fig:teaser}), but we also provide theoretical elements clarifying the cause of those issues. We show in particular that pose consistency and traditional performance metrics (such as MPJPE) cannot be optimized simultaneously by a standard regression model, because MPJPE ignores the topology of the space of human poses, and traditional regression models imply unimodality, thus overlooking the inherently ambiguous nature of 3D-HPE.

Our contributions include:
\begin{itemize}
    \item \ours, a novel, multi-hypothesis, manifold-constrained model for human-pose 2D-to-3D lifting, which is able to estimate the plausibility of each hypothesis without resorting to costly generative models.
    \item Theoretical insights that elucidate why traditional regression models associated with standard metrics such as MPJPE fail to enforce pose consistency.
    \item Extensive empirical results, including comparison to strong baselines, evaluation on two challenging datasets (Human\,3.6M
    and MPI-INF-3DHP), and ablations. \ours outperforms state-of-the-art methods by a substantial margin in terms of pose consistency, while still beating them in the MPJPE metric. The ablations confirm the importance of both multiple hypotheses and of constraining the poses to their manifold.
\end{itemize}

\CR{
The PyTorch~\cite{paszke2019pytorch} implementation of ManiPose and code used for all our experiments can be found at} \url{https://github.com/cedricrommel/manipose}.

\section{Related work}
\label{rwsection}
\myparagraph{Regression-based 2D-to-3D pose lifting.}
While 2D-to-3D human pose lifting was initially restricted to static frames~\cite{martinez_simple_2017, chen_3d_2017}, the field embraced recurrent~\cite{hossain_exploiting_2018}, convolutional~\cite{pavllo_3d_2019} and graph neural networks~\cite{cai_exploiting_2019, zou_modulated_2021, hu_conditional_2021, xu_graph_2021} to handle motion.
Spatial-temporal transformers appear more recently~\cite{shan_p-stmo_2022, zheng_3d_2021}, including MixSTE~\cite{zhang_mixste_2022}, arguably becoming the state of the art. We adopt them in our work. A few previous works constrain predicted poses to respect human symmetries ~\cite{xu_deep_2020,chen_anatomy-aware_2021}, an idea we advance with a novel constraint implementation,
in a multi-hypothesis setting.

\myparagraph{SMPL-based methods.}
\CR{
While 3D human pose lifting's objective is to predict 3D joint positions based on 2D keypoints, the neighboring field of human pose and shape reconstruction (HPSR) aims at estimating whole 3D body meshes from images.
HPSR is hence more challenging than 3D-HPE, which explains why models are often larger, frame-based and more reliant on optimization-based post-processing~\cite{kanazawa2018end, rempe2021humor, tiwari_pose-ndf_2022, goel2023humans}.
Nonetheless, our work shares some ideas from this field. Indeed, modern HPSR methods often predict joint angles (and body shape parameters), which are fed to the pre-trained parametric model SMPL~\cite{SMPL:2015} to produce human body meshes, thus ensuring that limbs' sizes remain constant along a movement.
Note, however, that these are also single-hypothesis regression methods and hence share the same caveats as most 3D-HPE approaches.
}

\myparagraph{Multi-hypothesis 3D-HPE.}
The intrinsic depth-ambiguity of 3D-HPE led the community to investigate multi-hypothesis approaches, including
Mixture Density Networks~\cite{li2020weakly,oikarinen2021graphmdn,bishop1994mixture}, variational autoencoders~\cite{sharma2019monocular}, normalizing flows~\cite{kolotouros2021probabilistic,wehrbein2021probabilistic} and diffusion models~\cite{holmquist2023diffpose, ci2023gfpose, gong2023diffpose}. Contrary to ours, those methods rely on a generative model to sample 3D pose hypotheses conditioned on the 2D input. A notable exception is MHFormer~\cite{li2022mhformer}, which, like \ours, is deterministic, but treats the hypotheses as intermediate representations to be aggregated at the final network layers, thus concluding with a one-to-one 2D-to-3D mapping. We strive to avoid such injectivity and to preserve the multiple hypotheses, for reasons we will justify both empirically and theoretically in the next sessions. Moreover, none of the previous multi-hypothesis approaches constrain hypotheses to lie on the human pose manifold, thus failing to guarantee good pose consistency.

\myparagraph{Multiple choice learning (MCL)}~\cite{guzman2012multiple} is a simple approach for estimating multimodal distributions, suited for ambiguous tasks, using the winner-takes-all loss. Adapted for deep learning by Lee \etal~\cite{lee2015m,lee2016stochastic}, it produces diverse predictors, each specialized in a particular subset of the data distribution.
MCL has proved its effectiveness in several computer vision tasks \cite{rupprecht_learning_2017, lee2017confident, mun2018learning, firman2018diversenet, makansi2019overcoming, tian2019versatile}, and was first applied to 2D-HPE in \cite{rupprecht_learning_2017}.
Our work is the first to employ MCL for the 3D-HPE task, by leveraging recent innovations of Letzelter \etal \cite{letzelter2024resilient}.

\section{\ours} \label{sec:method}

\begin{figure*}[ht]
    \centering
    \includegraphics[width=\textwidth]{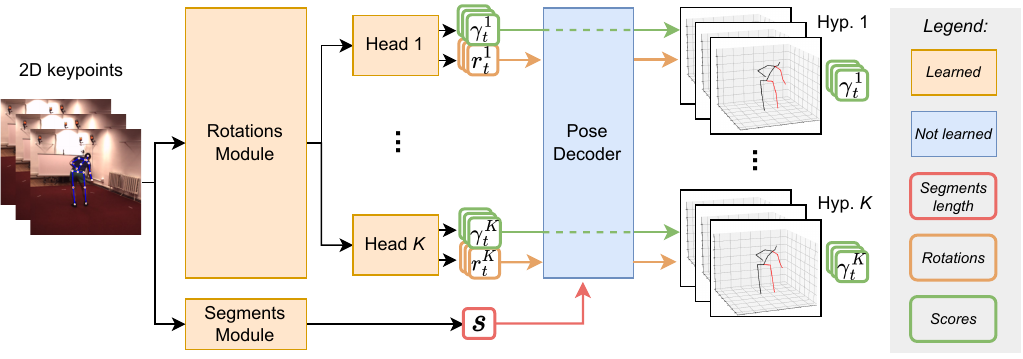}
    \caption{
    \textbf{Overview of \ours.}
    The rotations module predicts $K$ possible sequences of segment rotations with their corresponding likelihoods (scores), while the segments module estimates the shared segment lengths. Hence, predicted poses are constrained to a manifold defined by the estimated lengths, guaranteeing their consistency.
    }
    \label{fig:arch-overview}
\end{figure*}

Following the previous state of the art, we split 3D-HPE into two steps, first estimating $J$ human 2D
 keypoints in the pixel space  from a sequence of $T$ video frames ${[\inputs_1, \dots, \inputs_T] \in \R^{2 \times J \times T}}$, and then lifting them to 3D joint positions ${[\hpose_1, \dots, \hpose_T] \in \R^{3 \times J \times T}}$.
We focus on the second step (\ie, lifting) in the rest of the paper, assuming the availability of 2D keypoints $\inputs_i$. Our method aims to
both
ensure pose consistency and resolve depth ambiguity, as we will discuss in the next section.

\subsection{Constraining predictions to the pose manifold}
\label{sec:man-constrained}

\myparagraph{Rationale.}
Human morphology prevents the joints from arbitrarily occupying the whole space. Instead, the poses within a movement are restricted to a manifold, reflecting the human skeleton's rigidity.
If we knew the length of each segment connecting pairs of joints for a given subject, we could guarantee that the predicted poses lie on the correct pose manifold by only predicting the body part's rotations with respect to a reference skeleton.
Since we do not have access to ground-truth segment lengths in real use cases, we propose to predict them,
thus disentangling the estimation of the reference lengths (fixed across time) from the estimation of the joint rotations (variable across time).

\myparagraph{Disentangled representations.}
We constrain model predictions to lie on an estimated manifold by predicting parametrized disentangled transformations of a reference pose $\refpose \in (\R^3)^J$, for which all segments have unit length.
Namely, we propose to split the network into two parts (\cf \cref{fig:arch-overview}):
\begin{enumerate}
    \item \textbf{Segments module}, which predicts segment lengths ${s \in \R^{J-1}}$, shared by the $T$ frames (time steps) of the input sequence;
    \item \textbf{Rotations module}, which predicts the rotation $r=[r_{1,0}, \dots, r_{T,J-1}] \in (\R^d)^{J \times T}$ of each joint relative to their parent joint at each time step.
\end{enumerate}

\myparagraph{Rotations representation.}
We represent rotations using 6D continuous embeddings (\ie, $d=6$). Compared to quaternions or axis-angles, those representations are continuous and, hence, better learned by neural networks, as demonstrated by their proposers \cite{zhou_continuity_2019}.

\myparagraph{Pose decoding.}
To deliver pose predictions in $(\R^3)^{J \times T}$, the intermediate representations $(s, r)$ must be decoded.
We achieve that in three steps (\cf \cref{fig:pose-decoder}):
\begin{enumerate}
    \item We scale the unit segments of the reference pose $\refpose \in (\R^3)^J$ using $s$, forming a scaled reference pose $\refposes$:
    $\refposes_j = \refposes_{\tau(j)} + s_j (\refpose_j - \refpose_{\tau(j)})$ for $0 < j \leq J - 1$,
    where $\tau$ maps the index of a joint to
    its parent's,
    if any.
    \item For each time step ${1 \leq t \leq T}$ and joint $0 \leq j < J$, we convert the predicted rotation representations $r_{t,j}$ into rotation matrices
    ${R_{t,j} \in \SOt}$ (\cref{alg:rot-rep}).
    \item We apply those rotation matrices $R_{t,j}$ at each time step
    $t$
    to the scaled reference pose $\refposes$ using forward kinematics (\cref{alg:fk}).
\end{enumerate}

\subsection{Multiple choice learning}
\label{sec:mcl}
\myparagraph{\ours architecture.}
As explained in the introduction, the inherent depth ambiguity of pose lifting requires multiple hypotheses to conciliate pose consistency and MPJPE performance. To address this, we adopt the multiple choice learning (MCL) \cite{lee2016stochastic} framework, more precisely leveraging the \textit{resilient MCL} approach as proposed by Letzelter \etal \cite{letzelter2024resilient}. This methodology allows the estimation of conditional distributions for regression tasks, enabling our model to predict multiple plausible 3D poses for each 2D input.
Specifically, instead of a single rotation $r_t \in (\mathbb{R}^{d})^{J}$ per time step, \ours's rotations module predicts an intermediate representation $e_t \in (\R^{d'})^{J}$ that feeds $K$ linear heads (with weights $W^k_r$ and $W^k_\score$), each predicting its own rotation hypothesis $r^k_t \in (\mathbb{R}^{d})^{J}$ with a corresponding likelihood $\score^k_t \in [0,1]$.
That is, for all $1 \leq t \leq T$, $r^k_t = W^k_r e_t$ and $\score^k_t = \sigma [\tilde{\score}_t]_k$,
where the softmax function $\sigma$ is applied to the vector
$\tilde{\score}_t = [\tilde{\score}_t^{1},\dots,\tilde{\score}_t^{K}] \in \mathbb{R}^K$ of intermediate values $\tilde{\score}^k_t = W^k_\score e_t$.

All rotation hypotheses are decoded together with the shared segment-length predictions $s$, resulting in $K$ hypothetical pose sequences $\hpose^k=(\hpose^k_t)_{t=1}^T$, with corresponding likelihood sequences ${\score^k = (\score^k_t)_{t=1}^T}$, called \textbf{scores} hereafter (\cref{fig:arch-overview}).

\myparagraph{Loss function.}
As in~\cite{letzelter2024resilient}, \ours is trained with a composite loss
\begin{equation} \label{eq:tot-loss}
    \loss = \losswta + \beta \lossmu \,.
\end{equation}

The first term, $\losswta$, is the winner-takes-all loss~\cite{lee2016stochastic}
\begin{equation} \label{eq:wtaloss}
    \losswta(\hpose(\inputs), \pose) = \frac{1}{T} \sum_{t=1}^T \min_{k \in \llbracket 1, K \rrbracket} \ell(\hpose_t^k(\inputs),\pose_t) \,,
\end{equation}
where $\ell(\hpose^k_t(\inputs),\pose_t) \triangleq \frac{1}{J} \sum_{j=0}^{J-1} \|\pose_{t,j} - \hpose^k_{t,j}(\inputs)\|_2$, and $\hpose_t^k(\inputs)$ denotes the pose prediction at time $t$ using the $k^{\text{th}}$ head.
The second term, $\lossmu$, is the scoring loss
\begin{equation}
\label{scoringloss}
    \lossmu(\hpose(\inputs), \score(\inputs), \pose) = \frac{1}{T} \sum_{t=1}^T \CE\big(\delta(\hpose_t,\pose_t), \score_t(\inputs)\big) \,,
\end{equation}
where $\CE(\cdot,\cdot)$ is the cross-entropy, $\hpose_t = (\hpose^k_t)_{k=1}^{K}$, and
\begin{equation}
    [\delta(\hpose_t,\pose_t)]_k \triangleq \mathbf{1}\Big[k \in \argmin_{k' \in \llbracket 1, K \rrbracket} \;\ell\left(\hpose^{k'}_t,\pose_t \right)\Big]
\end{equation}
is the indicator function of the \textit{winner} pose hypothesis, which is the closest to the ground truth.
Eq.\,\eqref{scoringloss} is the average cross-entropy between target and predicted scores $\score_t(\inputs) \in [0,1]^K$ at each time $t$.

Those losses are complementary. The winner-takes-all loss updates only the best predicted hypothesis, specializing each head on part of the data distribution \cite{lee2016stochastic}.
The scoring loss allows the model to learn how likely each head is to winning,
thus avoiding overconfidence of non-winner heads (\cf \cite{lee2017confident, tian2019versatile}).

\myparagraph{Conditional distribution estimation.}
As detailed in \cite{letzelter2024resilient},
the model may be interpreted probabilistically as a multimodal conditional density estimator.
More precisely, it models the distribution $\prob(\pose|\inputs)$ of 3D poses conditioned on 2D poses as a mixture of Dirac distributions:
\begin{equation}
    \hat{\prob}(\pose|\inputs) \!\triangleq\!\sum_{k=1}^K \score^k(\inputs) \delta_{\hpose^k(\inputs)}(\pose) \,.
\end{equation}
Hence, the predicted conditional distribution has, at each predicted hypothesis $\hpose^k$, a peak whose likelihood is given by the predicted score $\score^k$. As described in \cref{sec:formal_analysis}, interpreting hypotheses and scores probabilistically enables us to handle depth ambiguity.

\section{Formal analysis}
\label{sec:formal_analysis}

\begin{wrapfigure}[11]{r}{0.42\textwidth}
\vspace{-22pt}
  \begin{center}
    \includegraphics[width=0.4\textwidth]{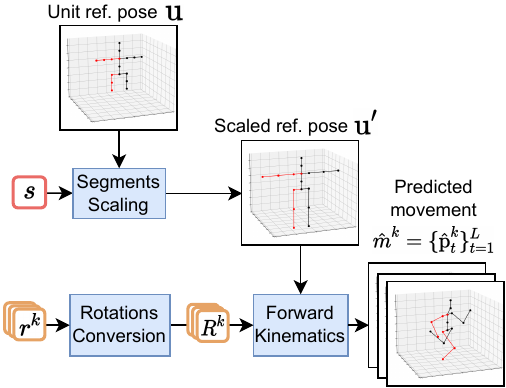}
  \end{center}
  \vspace{-4pt}
  \caption{\textbf{Pose decoder overview.}}
  \label{fig:pose-decoder}
\end{wrapfigure}

ManiPose, as outlined in \cref{sec:method}, is crafted to address the flaws inherent in unconstrained, single-hypothesis lifting-based 3D-HPE methods (see \cref{fig:teaser}). This section illustrates that without ManiPose's critical components (multiple hypotheses and manifold constraint), it is impossible to simultaneously minimize joint error and ensure pose consistency (\cref{sec:theory}). To illustrate this, a toy example within a simplified 1D-to-2D framework is provided
in \cref{sec:toy-exp}.

\subsection{Single-hypothesis position-error minimization leads to inconsistent skeleton lengths}
\label{sec:theory}

We formally highlight the limitations of unconstrained single-hypothesis 3D-HPE, justifying our approach, which combines
consistency constraints
and multiple hypotheses to resolve depth ambiguity.

Let $\pose=[\pose^1, \dots, \pose^J] \in \R^{3 \times J}$ be a human pose, defined by the Cartesian 3D coordinates of each of the $J$ joints of a predefined skeleton. Then, a sequence of $T$ poses of the same subject at increasing time steps ${t_1 \dots t_T \in \R}$ forms a movement $\mov=[\pose_0, \dots, \pose_T] \in \R^{3 \times J \times T}$. Assuming bone length is fixed during a movement (which is empirically verifiable in human pose datasets), then the poses $\pose_t$ of $\mov$ must all lie on the same smooth manifold.

\begin{proposition}[Human pose manifold] \label{th:manifold}
    Assuming a rigid skeleton, all poses of a movement $\mathrm{m}=[\pose_t]_{t=1}^T$ lie on a manifold $\manif$ of dimension $2(J-1)$:
\begin{equation}
    \forall t \in \{1, \dots, T\}, \quad \pose_t \in \manif \,.
\end{equation}
\end{proposition}

\myparagraph{Proof sketch.} (Detailed in \cref{app:proofs}). Skeleton rigidity implies that, if $i$ is a joint connected to the root, then it lies on a 2D sphere $S^2\left(0, s_{i, 0}\right)$ centered at the origin with fixed radius $s_{i, 0}$. Another joint $j$ linked to $i$ has a position expressible by its spherical coordinates relative to $i$ with fixed radius $s_{j, i}$. That implies an homeomorphism between the position $\mathrm{p}_{t,j}$ of joint $j$ and the direct product of spheres centered at the origin $S^2\left(0, s_{i, 0}\right) \times S^2\left(0, s_{j, i}\right)$. By induction, one can show that $\pose_t$ lies on a subspace of $(\R^3)^J$, which is homeomorphic to a product of spheres centered at the origin. \hfill $\blacksquare$

Proposition \ref{th:manifold} implies that all poses predicted for a video sequence should ideally lie on the same manifold $\manif$ as the ground-truth data, which is homeomorphic to the direct product of 2D unit spheres $(S^2)^{J - 1}$ (\cf \cref{app:proofs}).
Crucially, we can further show that minimizing joint position error using a single-hypothesis model necessarily leads to predicted poses lying outside the true manifold:
\begin{proposition}[Inconsistency of MSE minimizer] \label{th:mpjpe}
    With a rigid skeleton and mild assumptions on the training distribution,
    predicted 3D poses minimizing the traditional mean squared error (MSE) loss lie outside the
    pose manifold $\manif$.
\end{proposition}

\myparagraph{Proof sketch.} (See \cref{app:proofs}). Consider a skeleton with $J$ joints, with $(\inputs, \pose$), as pairs of corresponding 2D inputs and 3D poses. Let the function $\ell=(\ell_j)_{j=1}^{J-1}$ compute the lengths of the segments in a pose, which shall remain constant.
On a dataset $\{(\inputs_i, \pose_i)\}_{i=1}^N$ drawn from the joint distribution of 2D and 3D poses, let the expected MSE of a traditional predictive model $f$ be  $\mathbb{E}_{\mathrm{x}, \mathrm{p}}\left[\|\mathrm{p}-f(\mathrm{x})\|_2^2\right]$. Let the ideal model $f^*$ be the one minimizing that expected MSE, which is the conditional expectation $f^*(\mathrm{x})=\mathbb{E}[\mathrm{p}\,|\,\mathrm{x}]$.
 Jensen inequality and the rigidity assumption imply that, for any joint $j$, $\ell_j^2\left(f^*(\mathrm{x})\right)<s_{j}^2$ where $s_{j}$ is the true length of the segment associated with joint $j$.
 This shows that the poses predicted by $f^{\star}$ violate the original segment length constraints, and thus, the original rigidity assumption. \hfill $\blacksquare$

Proposition \ref{th:mpjpe} has the following implications:
\begin{enumerate}[itemsep=0pt]
    \item Traditional unconstrained single-hypothesis approaches are bound to predict inconsistent movements, where bone lengths may vary.
    \item With a single hypothesis, models constrained to the manifold will always lose to unconstrained models in terms of MPJPE performance (formalized in Corollary \ref{th:consistent-subopt}).
    \item \CR{The only way of reaching both optimal MPJPE and consistency is through multiple hypotheses (formalized in Corollary \ref{th:multi-opti}).}
\end{enumerate}
Therefore, the MPJPE metric (and its traditional extensions) is insufficient to assess 3D-HPE, as it completely ignores pose consistency.
\VL{Furthermore, we are able to prove in Appendix \ref{sec:l2_risk} that multiple hypotheses (constrained or not) can always reach better joint position errors than single-hypothesis models.
}

\subsection{Insights to the formal argument on a simplified setting}
\label{sec:toy-exp}

\begin{wraptable}[7]{R}{0.52\textwidth}
\vspace{-4.5mm}
    \caption{
        \textbf{1D-to-2D performance.} \cref{fig:toy-results}-D setting, results averaged over five random seeds.
    }
    \centering
    \resizebox{0.5\textwidth}{!}{
    \begin{tabular}{lcc}
            \toprule
            & MPJPE $\downarrow$ & Distance to circle $\downarrow$ \\
            \midrule
            Unconst. MLP & 0.753 $\pm$ 0.008 & 0.42 $\pm$ 0.01\\
            Constrained MLP & 0.777 $\pm$ 0.027 & \textbf{0.00} $\pm$ \textbf{0.00}\\
            \ours & \textbf{0.752 $\pm$ 0.012} & \textbf{0.00} $\pm$ \textbf{0.00}\\
            \bottomrule
        \end{tabular}}
    \label{tab:toy_results}
\end{wraptable}

We illustrate the argument of \cref{sec:theory}  with a simplified 1D-to-2D setup. We further generalize this intuitive illustration to the 2D-to-3D setting in \cref{app:toy} of the supplementary.

As in human pose lifting, we take a root joint $J_0$ as reference, fixed at $(0,0)$. For a joint $J_1$, the problem amounts to predicting the 2D position $(x, y)$, given its 1D projection $u=x$, assuming a constant distance $s=1$ between them. This simplification ignores the camera perspective and considers the joints to be connected by a rigid segment as in the case of human poses.

We train three different models with comparable architectures on two datasets $\{(x_i, (x_i, y_i))\}_{i=1}^N$ sampled from the angular distributions represented in \textcolor{Blue}{blue} on \cref{fig:toy-results}.
The models correspond to:
\begin{enumerate}[itemsep=0pt]
    \item A 2-layer MLP (\markermlp) trained to minimize the mean squared error between true $(x,y)$ and predicted joint positions $(\hat{x}, \hat{y})$;
    \item A constrained MLP of the same size (\markerconst), predicting the angle $\hat{\theta}$ instead of the joint position;
    \item ManiPose: our constrained multi-hypothesis model capable of predicting $K=2$ possible angles $(\hat{\theta}^k)_{k=1}^K$ with their corresponding likelihoods.
\end{enumerate}

\begin{figure}[bth]
    \centering
    \includegraphics[width=0.95\textwidth]{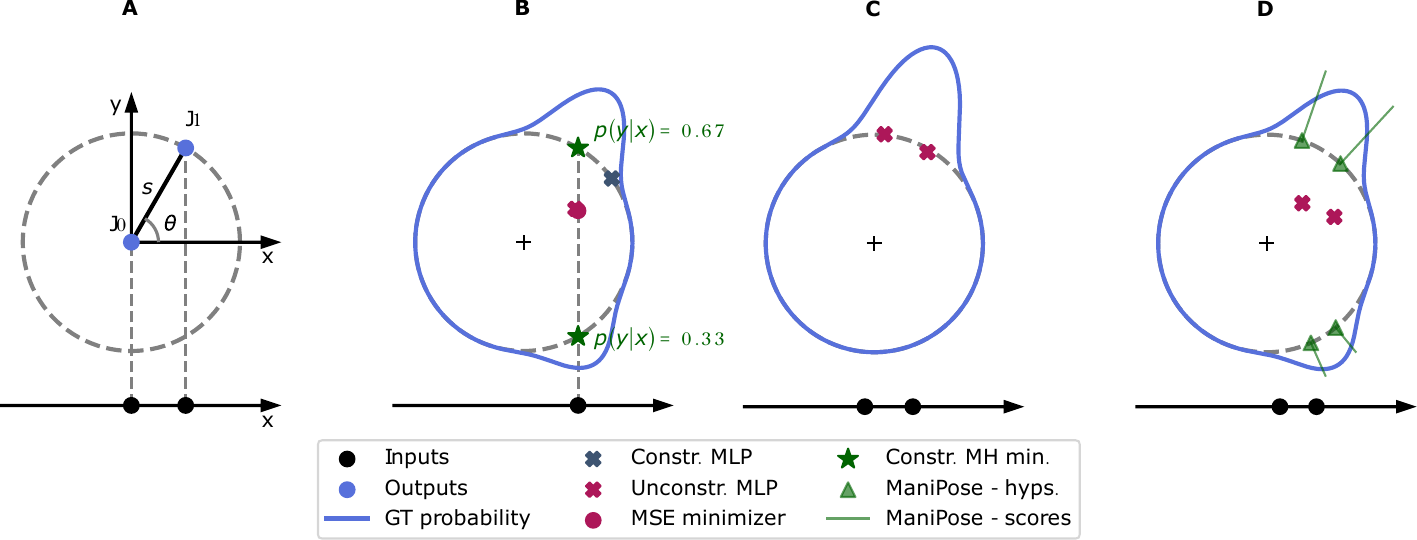}
    \caption{
        \textbf{(A)} 1D-to-2D articulated pose lifting problem.
        \textbf{(B)}
        True MSE minimizers under a multimodal distribution. One-to-one mappings cannot both reach optimal performance and stay on the pose manifold (dashed circle).
        \textbf{(C)} Without depth ambiguity, unconstrained models are effective.
        \textbf{(D)} Ambiguity from multimodal distributions challenges both constrained and unconstrained models. Multi-hypothesis approaches can deliver an acceptable solution to the problem.
    }
    \label{fig:toy-results}
\end{figure}

\cref{fig:toy-results} shows that the traditional unconstrained single-hypothesis approach (\markermlp) leads to good results in an easy unimodal scenario (C), but fails when facing a more challenging bimodal distribution (D), leading to predictions outside the circle manifold, as depth ambiguity makes the lifting problem ill-posed.
The single-hypothesis constrained model (\markerconst)  delivers predictions on the circle, at the cost of worse MPJPE performance than the unconstrained MLP. Such performance decrease is due to the Euclidean topology of the MPJPE metric having its minimum (\markermsemin) outside the manifold (\cref{fig:toy-results}-B).

Crucially, this implies that the unconstrained single-hypothesis models are bound to make inconsistent predictions, with varying ``bone lengths'' (the circle radius).
It also shows that models constrained to the manifold (circle) will always be outcompeted by unconstrained models on  MPJPE performance.

Predicting multiple hypotheses constrained to the circle, with their respective likelihoods (\markerstar{} in \cref{fig:toy-results}-B) allows escaping this dilemma, which is exactly
what ManiPose does (\markertriangle{} in  \cref{fig:toy-results}-D).
The predicted hypotheses are all on the circle, contrary to the unconstrained MLP, and spread between the two distribution modes, unlike the constrained single-hypothesis method.

Moreover, the predicted scores (length of \textcolor{manicol}{green lines}) match the $\frac{2}{3}$ and $\frac{1}{3}$ ground-truth likelihoods of the two modes.
Those advantages translate into perfect pose consistency and into \VL{comparable} MPJPE performance with \VL{respect to the unconstrained MLP} (\cref{tab:toy_results}).

\section{Experiments}
\label{sec:real-exp}

\subsection{Experimental setup}

\myparagraph{Datasets.} We evaluate our model on
two 3D-HPE datasets.
\textbf{Human\,3.6M}~\cite{ionescu_human36m_2014} contains 3.6 million images of 7 actors performing 15 different indoor actions. It is the most widely used dataset for 3D-HPE. Following previous works~\cite{zhang_mixste_2022,li2022mhformer,zheng_3d_2021,pavllo_3d_2019}, we train on subjects S1, S5, S6, S7, S8, and test on subjects S9 and S11, adopting a 17-joint skeleton (\cf \cref{fig:cjw_err_seg}).
We employ a pre-trained CPN~\cite{chen2018cascaded} to compute the input 2D keypoints, as in \cite{pavllo_3d_2019,zhang_mixste_2022}.
\textbf{MPI-INF-3DHP}~\cite{mehta2017monocular} also adopts a 17-joint skeleton, but, with fewer samples and containing both indoor and outdoor scenes, it is more challenging than Human\,3.6M. We used ground-truth 2D keypoints for this dataset, as usually done~\cite{zheng_3d_2021,chen_anatomy-aware_2021,zhang_mixste_2022}.

\myparagraph{Traditional evaluation metrics.}
The mean per-joint position error (MPJPE) is the usual performance metric for the datasets above, under different protocols, both reported in mm.
In protocol~\#1, the root joint position is set as a reference, and the predicted root position is translated to 0.
In protocol~\#2 (P-MPJPE), predictions are additionally Procrustes-corrected.
For MPI-INF-3DHP, additional thresholded metrics derived from MPJPE are often reported, such as AUC (Area Under Curve) and PCK (Percentage of Correct Keypoints) with a threshold at 150 mm, as explained in \cite{mehta2017monocular}.

\myparagraph{Pose consistency metrics.}
MPJPE being insufficient to assess pose consistency (\cref{sec:formal_analysis}), we further assess to which extent predicted skeletons are rigid by measuring the average standard deviations of segment lengths across time in predicted action sequences:
\begin{equation} \label{eq:seg-std}
    \text{MPSCE}  \triangleq \frac{1}{J-1} \sum_{j=1}^{J-1} \sqrt{ \frac{1}{T}
            \sum_{t=1}^T (s_{t,j, \tau(j)} - \bar{s}_{j, \tau(j)})^2 } \,,
\end{equation}
with $s_{t,j, i} = \|\hpose_{t,j} - \hpose_{t, i}\|_2$ and $\bar{s}_{j, i}= \frac{1}{T}\sum_{t=1}^T s_{t, j , i}$,
where $\tau$ was defined in \cref{sec:man-constrained}.
We call this metric, reported in mm, the Mean Per Segment Consistency Error (MPSCE).

Following~\cite{holmquist2023diffpose,rommel2023diffhpe},
we also assess the bilateral symmetry of predicted skeletons
through the Mean Per Segment Symmetry Error (MPSSE), in mm:
\begin{equation} \label{eq:sym-gap}
    \text{MPSSE} \triangleq \frac{1}{T \,|\mathcal{J}_\text{left}|} \sum_{t=1}^T \sum_{j \in \mathcal{J}_\text{left}} |s_{t,j,\tau(j)} - s_{t, j', \tau(j')}| \,, \qquad
    \text{with \ \ } j' = \zeta(j)\,,
\end{equation}
where $\mathcal{J}_\text{left}$ denotes the set of indices of left-side joints and $\zeta$ maps left-side joint indices to their right-side counterparts.

\myparagraph{Multi-hypothesis evaluation protocol.}
One must decide how to use multiple hypotheses to compute the metrics.
The dominant approach~\cite{li2019generating, li2020weakly,oikarinen2021graphmdn,sharma2019monocular,wehrbein2021probabilistic, holmquist2023diffpose} is the \textbf{oracle} evaluation, \ie, using the predicted hypothesis closer to the ground truth (\ie, Eq.\,\eqref{eq:wtaloss} for MPJPE).
That makes sense for multi-hypothesis methods, as the oracle metric measures the distance between the target and the discrete set of predicted hypotheses. It aligns with the idea of many possible outputs for a given input.

Hypotheses can also be \emph{aggregated} into a final pose, \eg, through unweighted or weighted averaging (using predicted scores).
The latter has the disadvantage of falling back to a one-to-one mapping scheme, which is precisely what we want to avoid
in a multi-hypothesis setting.

We report both oracle and aggregated metrics in our experiments, favoring oracle results.

\myparagraph{Implementation details.}
\ours, as presented in \cref{sec:method}, is compatible with any backbone.
Here, we chose to build on the MixSTE~\cite{zhang_mixste_2022} network for both the rotations and the segment modules (the latter in a reduced scale).
Details about our architecture and training appear in
\cref{app:implem-details}.

\subsection{Comparison with the state of the art}
\label{sec:comparison-sota}

\begin{table}[thp]
\vspace{-3mm}
\centering
\caption{\textbf{Pose consistency evaluation of state-of-the-art methods on Human3.6M.} MPJPE performance and pose consistency are not correlated; only \ours excels in both.
$T$: sequence length. $K$: number of hypotheses. Orac.: Metric computed using oracle hypothesis. \textcolor{customgray}{Grey lines}: Methods where the Oracle MPJPE is computed with non-comparable number of hypotheses with respect to the other baselines. \textbf{Bold}: best; \underline{Underlined}: second best. *: Method with unavailable code ; MPSSE values reported in \cite{holmquist2023diffpose}.
$\dagger$: Results with comparable number of hypotheses. $\ddagger$: Results computed with official checkpoint and code.
}
\resizebox{0.8\columnwidth}{!}{%
\begin{tabular}{lcccccc}
\toprule
            &$T$ & $K$ & Orac. & MPJPE\,$\downarrow$ &
            MPSSE\,$\downarrow$ &
            MPSCE\,$\downarrow$ \\ \midrule

\textit{Single-hypothesis methods:} & & &  \\
ST-GCN~\cite{cai_exploiting_2019} & 7 & 1 &  & 48.8 & 8.9 & 10.8 \\
VideoPose3D~\cite{pavllo_3d_2019} & 243 & 1 &  & 46.8 & 6.5 & 7.8 \\
PoseFormer~\cite{zheng_3d_2021} & 81 & 1 &  & 44.3 & {4.3} & {7.2} \\
Anatomy3D~\cite{chen_anatomy-aware_2021} & 243 & 1 &  & 44.1 & 1.4 & 2.0 \\
MixSTE~\cite{zhang_mixste_2022}     & 243 & 1 &  & {40.9} & 8.8 & 9.9 \\
\midrule
\textit{Multi-hypothesis methods:} & & & \\
\textcolor{customgray}{Wehrbein \etal} \cite{wehrbein2021probabilistic} & \textcolor{customgray}{1} & \textcolor{customgray}{200} & \textcolor{customgray}{\cmark} & \textcolor{customgray}{44.3} & \textcolor{customgray}{12.2} 
& \textcolor{customgray}{14.8} \\
\textcolor{customgray}{DiffPose (Holmquist \etal)} \cite{holmquist2023diffpose}* & \textcolor{customgray}{1} & \textcolor{customgray}{200} & \textcolor{customgray}{\cmark} & \textcolor{customgray}{43.3} & \textcolor{customgray}{14.9} & - \\
\textcolor{customgray}{GFPose}~\cite{ci2023gfpose} & \textcolor{customgray}{1} & \textcolor{customgray}{200} & \textcolor{customgray}{\cmark} & \textcolor{customgray}{35.6} & \textcolor{customgray}{13.1} & \textcolor{customgray}{16.5} \\
D3DP (P-Best)~\cite{shan2023diffusion} & 243 & 20 & \cmark & 39.5 & 6.9 & 9.0 \\
GFPose~\cite{ci2023gfpose}$^\dagger$ & 1 & 10 & \cmark & 45.1 & 13.1 & 16.5 \\
Sharma \etal \cite{sharma2019monocular} & 1 & 10 & \cmark & 46.8 & 13.0 
& 9.9 \\
DiffPose (Gong \etal)~\cite{gong2023diffpose}$^\ddagger$ & 243 & 5 & \cmark & \underline{39.3} & 5.2 & 6.1 \\
MHFormer~\cite{li2022mhformer} & 351 & 3 &  & 43.0 & 5.7 & 8.0 \\
\midrule
\ours (Ours) & 243 & 5 &  & 42.1 & \underline{0.4}                     & \underline{0.8}                      \\
\ours (Ours) & 243 & 5 & \cmark & \textbf{39.1} & \textbf{0.3}                     & \textbf{0.5}                      \\
\bottomrule
\end{tabular}%
}
\label{tab:consistency}
\end{table}

\myparagraph{Human\,3.6M.}
Comparisons with state-of-the-art
single- and multi-hypothesis
methods are presented in \cref{tab:consistency} and illustrated in \cref{fig:teaser}.
\ours outperforms previous methods in terms of \VL{Oracle} MPJPE \VL{in comparable scenarios}, while reaching nearly perfect consistency.
Moreover, note that MPJPE and consistency metrics are not positively correlated for single-hypothesis methods.
As predicted in \cref{sec:theory}, our empirical results show that MPJPE improvements achieved by MixSTE come at the cost of poorer consistency compared to previous models.
In contrast, the only single-hypothesis constrained model, Anatomy3D~\cite{chen_anatomy-aware_2021}, achieves good consistency at the expense of inferior MPJPE.
Those results empirically validate the theoretical predictions of Sections \ref{sec:theory} and \ref{app:proofs}, further confirming what we have shown, intuitively, in the simplified 1D-to-2D setting (\cref{sec:toy-exp}).
Note that while \ours is deterministic, previous multi-hypothesis methods are generative,
except for MHFormer.
\cref{tab:consistency} shows that they require up to two orders of magnitude more hypotheses than \ours to reach competitive performance \VL{(see, \eg, the performance of GFPose). This property is expected. Indeed, optimization based on Winner-Takes-All theoretically \RM{leads to} an optimal coverage of the modes of the conditional distribution with a fixed number of samples \cite{letzelter2024winner}, in contrast to generative-based approaches. This is reflected in the oracle metric, which approximates the so-called \textit{quantization} (or Distortion) error, as defined in \eqref{eq:wta_risk}, when the number of data points is large.}
More detailed MPJPE results per action appear in \cref{table:h36m,table:h36m-p2} in the supplemental.
\CR{We also complement our analysis on the diversity of ManiPose in \cref{fig:coverage} of the appendix.}

\cref{fig:qualitative} showcases qualitative results, where multiple hypotheses help in depth-ambiguous situations.

\begin{table}[htbp]
    \vspace{-3mm}
    \centering
    \caption{\textbf{
    Comparison with the state-of-the-art on MPI-INF-3DHP using ground-truth 2D poses.}
    $T$: sequence length.
    }
    \resizebox{0.7\columnwidth}{!}{%
\begin{tabular}{lcccccc}
\toprule
            & $T$
            & PCK\,$\uparrow$
            & AUC\,$\uparrow$
            & MPJPE\,$\downarrow$
            & MPSSE\,$\downarrow$
            & MPSCE\,$\downarrow$
            \\ \midrule
VideoPose3D~\cite{pavllo_3d_2019} & 81 & 85.5 & 51.5 & 84.8 & 10.4 & 27.5 \\

PoseFormer~\cite{zheng_3d_2021} & 9 & 86.6 & 56.4 & 77.1 & 10.8 & 14.2  \\

MixSTE~\cite{zhang_mixste_2022}  & 27 &94.4 & 66.5 &
54.9
& 17.3 & 21.6 \\
P-STMO~\cite{shan_p-stmo_2022} & 81 & 97.9 & \underline{75.8} & \textbf{32.2} & \underline{8.5} & \underline{11.3} \\
\midrule
\ours (Ours) Aggr. & 27 & \underline{98.0} & 75.3 & 37.7 & \textbf{0.6} & \textbf{1.3}  \\
\ours (Ours) Orac. & 27 & \textbf{98.4} & \textbf{77.0} & \underline{34.6} & \textbf{0.6} & \textbf{1.3}\\ \bottomrule
\end{tabular}
}
    \label{tab:mpi-inf}
\end{table}

\myparagraph{MPI-INF-3DHP.}
Similar results were obtained for this dataset (\cf \cref{tab:mpi-inf}).
Not only does \ours reach consistency errors close to $0$, but also best PCK and AUC performance.
As for MPJPE, only \cite{shan_p-stmo_2022} achieves slightly better performance, at the cost of large pose consistency errors.

\subsection{Ablation study}

\begin{table}[thp]
\vspace{-3mm}
\caption{\textbf{Ablation study: Single hypothesis cannot optimize both MPJPE and consistency.} \ours uses the same backbone as MixSTE. MR: with manifold regularization. MC: manifold-constrained.
\textbf{Bold}: best. \underline{Underlined}: second best. 
}
\centering
\resizebox{0.75\columnwidth}{!}{%
\begin{tabular}{lccccccc}
\toprule
 &
  MR &
  MC &
  $K$ &
  \# Params.
  & MPJPE\,$\downarrow$
  & MPSSE\,$\downarrow$
  & MPSCE\,$\downarrow$ \\ \midrule

\ours (Ours)& \xmark  & \cmark & 5 & 34.44 M & \textbf{39.1} & \textbf{0.3}  & \textbf{0.5}  \\
~w/o MH & \xmark  & \cmark & 1 & 34.42 M & 44.6 & \textbf{0.3}  & \textbf{0.5}  \\
~w/o MC, w/ MR & \cmark & \xmark  & 1 & 33.78 M & 42.3  &  \underline{5.7}    &   \underline{7.3}   \\
~w/o MR (MixSTE) & \xmark  & \xmark  & 1 & 33.78 M & \underline{40.9} & 8.8 & 9.9 \\
 \bottomrule
\end{tabular}%
}
  \vspace{1mm}
\label{tab:ablation}
\end{table}

\begin{figure}[bth]
    \centering
    \includegraphics[width=\textwidth]{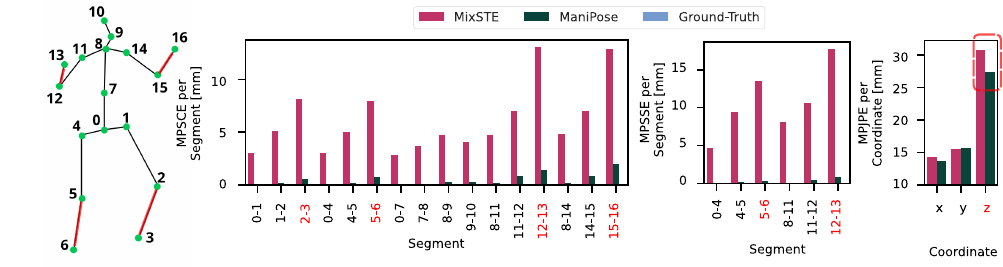}
    \caption{
        \textbf{MPSCE,
        MPSSE and
        MPJPE per segment/coordinate (lower is better).}
        \ours mostly helps to deal with the depth ambiguity ($z$ coordinate).
        Ground-truth poses are represented but not visible because they have perfect consistency.
    }
    \vspace{-3mm}
    \label{fig:cjw_err_seg}
\end{figure}

\myparagraph{Impact of components.}
We evaluate the impact of removing each component of \ours on the Human\,3.6M performance (\cref{tab:ablation}).
The components tested are the multiple hypotheses (MH) and the manifold constraint (MC).
We also compare MC to a more standard manifold regularization (MR), \ie, adding Eq.\,\eqref{eq:seg-std} to the loss.
Note that without all these components, we fall back to MixSTE~\cite{zhang_mixste_2022}, and that the performances reported in \cref{tab:ablation} also appear in \cref{fig:teaser}.

We see that MR helps to improve pose consistency, but not as much as MC.
However, without multiple hypotheses, MC consistency improvements come at the cost of degraded MPJPE performance, as foreseen by our formal analysis (\cref{sec:formal_analysis}). Only the combination of both MC and MH allows us to optimize both consistency and MPJPE.

\myparagraph{Fine error analysis.}
We can see in \cref{fig:cjw_err_seg} that, compared to MixSTE, \ours reaches substantially superior MPSSE and MPSCE, consistency across all skeleton segments.
Furthermore, note that larger MixSTE errors occur for segments \textsc{knee-foot} and \textsc{elbow-wrist}, which are the most prone to depth ambiguity.
That agrees with coordinate-wise errors depicted in \cref{fig:cjw_err_seg}, showing that \ours improvements mostly translate into a reduction of MixSTE depth errors, which are twice as large as for other coordinates.
Further ablations, including the effect of the number of hypotheses $K$, the score loss weight $\beta$ and the rotations representation choice appear in the supplemental.

\begin{figure}[htp]
    \centering
    \includegraphics[width=\columnwidth]{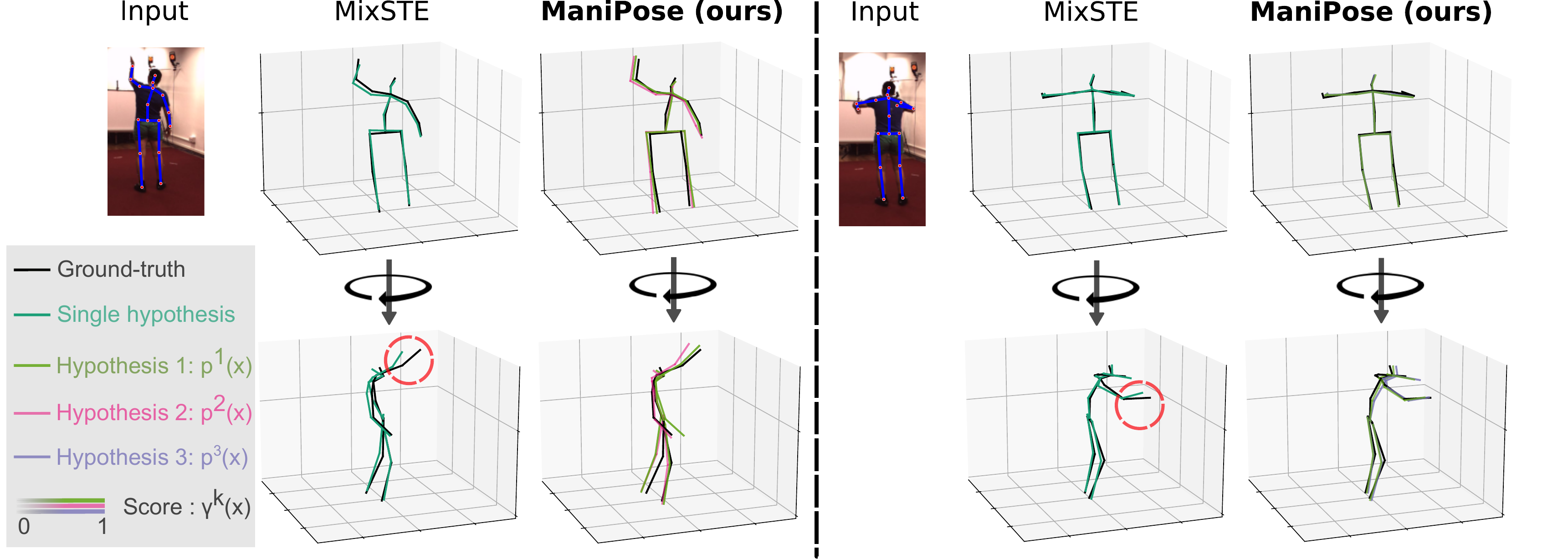}
    \caption{
    \textbf{Qualitative comparison between \ours and state-of-the-art regression method, MixSTE.} Two pairs of predicted hypotheses by ManiPose are illustrated in green-pink (left) and green-purple (right), where opacity is used to represent the predicted scores.
    Multiple hypotheses and constraints help to deal with depth ambiguities and avoids predicting shorter limbs (red circles).
    \vspace{-4mm}
    }
    \label{fig:qualitative}
\end{figure}

\section{Conclusion}
We presented a new manifold-constrained multi-hypothesis human pose lifting method (\ours) and demonstrated its empirical superiority to the existing state-of-the-art on two challenging datasets. Further, we provided theoretical evidence supporting the tenets of our method, by showing the inherent limitation of unconstrained single-hypothesis approaches to 3D-HPE.
We established that unconstrained single-hypothesis methods cannot deliver consistent poses and that constraining or regularizing single-hypothesis models leads to worse position errors. We also showed that traditional MPJPE-like metrics are insufficient to assess consistency.

\myparagraph{Limitations.} To guarantee its
consistency, \ours relies on the forward kinematics algorithm, which is inherently sequential across joints. Removing that dependence is an interesting avenue for accelerating the method.
On another note, while \ours ensures the rigidity of the predicted poses, imposing constraints within human body articulation limits presents another area for enhancement.

\begin{ack}
\CR{
This work was granted access to the HPC resources of IDRIS under the allocation 2023-AD011014073 made by GENCI.
It was also partly funded by the French Association for Technological Research (ANRT CIFRE contract 2022-1854). We are grateful to the reviewers for their insightful comments.
}
\end{ack}

\clearpage

\bibliographystyle{splncs04}
\bibliography{main}


\newpage
\appendix

\section*{Appendix / supplemental material}





\noindent This supplemental material is organized as follows:
\begin{itemize}
    \item \cref{app:asp-verif} contains empirical verification of our assumptions,
    \item \cref{app:proofs} presents the proofs of our theoretical results, together with a few corollaries,
    \item \cref{app:toy} provides further implementation details concerning the 1D-to-2D experiment,
    as well as an extension to the 2D-to-3D setting,
    \item \cref{app:implem-details} contains implementation and training details concerning \ours, as well as compared baselines,
    \item \cref{app:more-h36m-res} presents further results of the Human\,3.6M experiment,
    \item and finally, \cref{app:code} explains the provided experiment code.
\end{itemize}

\section{Assumption verifications} \label{app:asp-verif}

Let us first define a few elements that we will need needed for our derivations.
\begin{definition}[Human skeleton]
    We define a human skeleton as an undirected connected graph $G=(V, E)$ with $J=|V|$ nodes, called \emph{joints}, associated with different human body articulation points. We assume a predefined order of joints and denote $A=[A_{ij}]_{0 \leq i,j < J} \in \{0,1\}^{J \times J}$ the adjacency matrix of $G$, defining joints connections.
\end{definition}
\begin{definition}[Human pose and movement]
    Let $G$ be a skeleton of $J$ joints. We attach to each joint $i$ a position $\poseG_i$ in $\R^3$ and call the vector ${\poseG = [\poseG_0, \dots, \poseG_{J-1}] \in (\R^3)^J}$ a \emph{human pose}.
    Furthermore, given a series of increasing time steps ${t_1 < t_2 < \dots < t_T \in \R}$, we define a human \emph{movement} $\mathrm{m}$ as a sequence of poses \emph{of the same subject} at those instants ${\mathrm{m}=[\poseG_{t_1}, \dots, \poseG_{t_T}] \in (\R^3)^{J \times T}}$.
\end{definition}

We base the theoretical results of \cref{sec:theory} on the following assumptions.
The first states the reference frame traditionally used for assessing 3D-HPE models:
\begin{assumption}[Reference root joint]
    For any skeleton $G$ and movement $\mov$ of length $T$, the joint of index $0$, called the \emph{root joint}, is at the origin $\poseti{0} = [0, 0, 0]$ at all times $t_1 \leq t \leq t_T$. That is equivalent to measuring positions $\poseG_t$ in a reference frame attached to the root joint.
    \label{asp:root}
\end{assumption}
The second assumption concerns the rigidity of human body parts:
\begin{assumption}[Rigid segments]
    We assume that the Euclidean distance between adjacent joints is constant within a movement $\mov$: for any pair of instants $t$ and $t'$
    and for any joints $i,j$ such that $A_{ij}=1$, we assume that
    \begin{equation}
        s_{t, i, j}=s_{t', i, j}=s_{i,j}\,,
    \end{equation}
    where $s_{t,i,j}=\|\poseti{i} - \poseti{j}\|_2>0$.
     \label{asp:rigid-segs}
\end{assumption}
Finally, we assume that the conditional distribution of poses does not collapse to a single point, \ie, that we have a one-to-many problem:
\begin{assumption}[Non-degenerate conditional distribution]
    Given a joint distribution $\prob(\inputsG, \poseG)$ of 3D poses ${\poseG \in (\R^3)^J}$ and corresponding 2D inputs ${\inputsG \in (\R^2)^J}$, we assume that the conditional distribution $\prob(\poseG | \inputsG)$ is non-degenerate, \ie, it is not a single Dirac distribution.
    \label{asp:non-degen}
\end{assumption}
\noindent Note that can be true even when $\prob(\inputsG, \poseG)$ is unimodal (\eg, \cref{fig:toy-results}).

We verified on Human\,3.6M~\cite{ionescu_human36m_2014} ground-truth data that assumptions \ref{asp:rigid-segs} and \ref{asp:non-degen} hold for actual poses in both training and test splits.

\myparagraph{Segments rigidity.}
As shown on \cref{fig:cjw_err_seg,fig:jw_err_seg_sym}, ground-truth 3D poses have perfect MPSSE~\eqref{eq:sym-gap} and MPSCE~\eqref{eq:seg-std} metrics, meaning that ground-truth skeletons are perfectly symmetric, with rigid segments.
Assumption \ref{asp:rigid-segs} is thus verified in actual training and test data.

\myparagraph{Non-degenerate distributions.}
As shown on \cref{fig:multi-modality}, the conditional distribution of ground-truth 3D poses given 2D keypoints position is clearly multimodal, and, thus, non-degenerate (not reduced to a single Dirac distribution). That validates assumption \ref{asp:non-degen} and explains why multi-hypothesis techniques are necessary.

\begin{figure}
    \centering
    \begin{subfigure}[b]{0.5\columnwidth}
        \centering
        \includegraphics[width=\columnwidth]{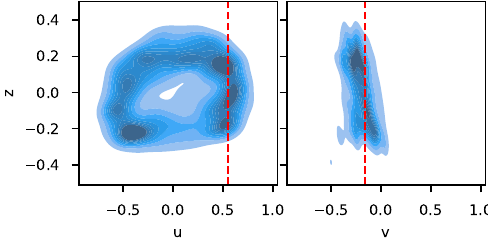}
        \vspace{-2.0em}\caption{S9, Walking}
    \end{subfigure}
    \begin{subfigure}[b]{0.5\columnwidth}
        \centering
        \vspace{0.5em}\includegraphics[width=\columnwidth]{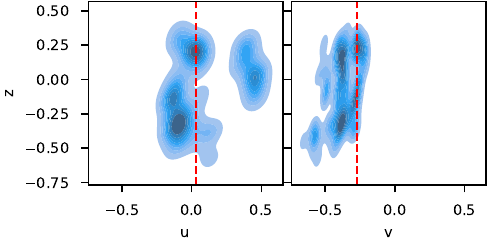}
        \vspace{-1.5em}\caption{S1, Greeting}
    \end{subfigure}

    \begin{subfigure}[b]{0.5\columnwidth}
        \centering
        \vspace{0.5em}\includegraphics[width=\columnwidth]{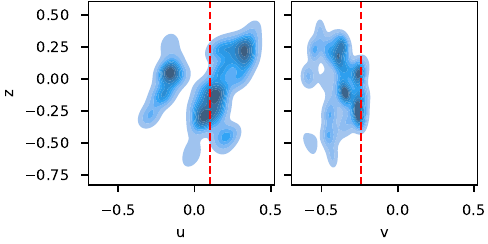}
        \vspace{-1.5em}\caption{S11, Directions}
    \end{subfigure}

    \begin{subfigure}[b]{0.5\columnwidth}
        \centering
        \vspace{0.5em}\includegraphics[width=\columnwidth]{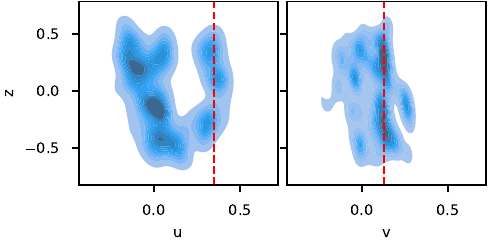}
        \vspace{-1.5em}\caption{S1, SittingDown}
    \end{subfigure}

    \caption{
        \textbf{Estimated joint distributions of ground-truth 2D inputs ($u$, $v$ pixel coordinates) together with 3D $z$-coordinates (depth) for different subjects and actions.}
        The depth density conditional on inputs is clearly multimodal.
        Vertical red lines are examples of depth-ambiguous inputs.
        Distributions are estimated with a kernel density estimator from the Seaborn plotting library~\cite{Waskom2021}.
    }
    \label{fig:multi-modality}
\end{figure}

\section{Proofs and additional corollaries} \label{app:proofs}

\subsection{Properties of manifold constraint and multi-hypotheses models}

This section contains the proofs of the theoretical results presented in \cref{sec:theory}, together with a few corollaries.

\begin{proof}[Proposition \ref{th:manifold}]
    Let $i$ be a joint connected to the root $p_0$ (\ie, $A_{i0}=1$). From assumptions \ref{asp:root} and \ref{asp:rigid-segs}, we know that at any instant $t$, $\poseti{i}$ lies on the sphere $S^2(0, s_{i,0})$ centered at $0$ with radius $s_{i,0}$ independent of time.
    Therefore, its position can be fully parameterized in spherical coordinates by two angles $(\theta_{t,i}, \phi_{t,i})$.
    Let $j$ be a joint connected to $i$. Like before, assumption \ref{asp:rigid-segs} implies that at any instant $t$, $\poseti{j}$ lies on the moving sphere $S^2(p^ G_{t,i}, s_{j,i})$ centered at $p^ G_{t,i}$ with radius $s_{j,i}$ independent of time. Thus, we can fully describe $\poseti{j}$ with the position of its center, $\poseti{i}$ and the spherical coordinates $(\theta_{t,j}, \phi_{t,j})$ of joint $j$ relative to the center of the sphere, \ie, joint $i$. That means that there is a bijection between the possible positions attainable by $\poseti{j}$ at any instant and the direct product of spheres $S^2(0, s_{i,0})\,\otimes\,S^2(0, s_{j,i})$.\footnote{$S^2(0, s_{j,i})$ is homeomorphic to $S^2(\poseti{j}; s_{j,i})$.}
    That bijection is an homeomorphism since it is a composition of homeomorphisms:
    we can compute $\poseti{j}$ from $(\theta_{t, i}, \phi_{t, i}, \theta_{t, j}, \phi_{t, j})$ following the forward kinematics algorithm~\cite{murray_mathematical_2017} (\cf algo.\,\ref{alg:fk}), \ie, using a composition of rotations and translations.

    Now let us assume for some arbitrary joint $k$ that $\poseti{k}$ lies at all times on a space $\man_{2d}$ homeomorphic to a product of spheres of dimension $2d$. That means that $\poseti{k}$ can be fully parametrized using $2d$ spherical angles $(\theta_1, \phi_1, \dots, \theta_d, \phi_d)$.
    Let $l$ be a joint connected to $k$ (typically one further step away from the root joint $\pose_0$ and not already represented in $\man_{2d}$).
    As before, at any instant $t$, $\poseti{l}$ needs to lie on the sphere centered on $\poseti{k}$ of constant radius $s_{k,l}$. Thus, we can fully describe $\poseti{l}$ using the ${2(d+1)}$-tuple of angles obtained by concatenating its spherical coordinates relative to joint $k$, together with the $2d$-tuple describing $\poseti{k}$, \ie the center of the sphere.
    So $\poseti{l}$ lies on a space $\man_{2(d+1)}$ homeomorphic to a product of spheres of dimension $2(d+1)$.

    We can conclude by induction that at any instant $t$, $\pose_t=[\poseti{1}, \dots, \poseti{J}]$ lies on the same subspace of $(\R^3)^J$, which is homeomorphic to a product of spheres centered at the origin:
    \begin{equation}
        \bigotimes_{i < j / A_{ij}=1} S^2(0, s_{i,j})\,.
    \end{equation}
    Finally, the previous space is trivially homeomorphic to $(S^2)^{J-1}$ through the scaling $(1/s_{i,j})_{i < j / A_{ij}=1}$.
    $(S^2)^{J-1}$ is a manifold of dimension $2(J-1)$ as the direct product of $J-1$ manifolds of dimension $2$. $\blacksquare$
\end{proof}

\begin{proof}[Proposition \ref{th:mpjpe}]
    Let $G$ be a skeleton with $J$ joints, $\inputs \in (\R^2)^J$ a 2D pose, $\pose \in (\R^3)^J$ its corresponding 3D pose, and $\prob(\inputs, \pose)$ a joint distribution of poses in 2D and 3D. We define $\ell=(\ell_j)_{j=1}^{J-1}$ as the function allowing us to compute the length of the segments of a pose~$\pose$:
    \begin{equation}
        \ell_j: \pose \mapsto \|\pose_j - \pose_{\tau(j)}\|_2\,, \quad 0 < j\leq J - 1 \,,
    \end{equation}
    where $\tau: \{1, \dots, J - 1\} \to \{0, \dots, J - 1\}$
    maps joint indices to the index of their parent joint:
    \begin{equation}
        \tau(i) = j < i, \quad \text{s.t.} ~A_{ij}=1 \,.
    \end{equation}
    From assumption \ref{asp:rigid-segs}, we know that for any pose $\pose$ from the training distribution,
    \begin{equation} \label{eq:fixed-sed-len}
        \forall j \,, \quad \ell_j(\pose)=s_{j,\tau(j)}\,.
    \end{equation}
    Given ${D=\{(\inputs_i, \pose_i)\}_{i=1}^N \sim \prob(\inputs,\pose)}$, some i.i.d.\ evaluation data, the MSE of a model $f$ is defined as:
    \begin{equation}
        \text{MSE}(f; N) = \frac{1}{N} \sum_{i=1}^N \| \pose_i - f(\inputs_i) \|^2_2 \,,
    \end{equation}
    %
    %
    and converges to
    \begin{equation} \label{eq:l2-risk}
        \text{MSE}^*(f) = \mathbb{E}_{\inputs,\pose}\big[\| \pose - f(\inputs) \|^2_2\big]
    \end{equation}
    as the dataset size $N$ goes to infinity.
    We then define the oracle MSE minimizer as
    \begin{equation} \label{eq:mpjpe-min}
        f^* = \arg \min_f \text{MSE}^*(f)\,.
    \end{equation}
    The quantity in \eqref{eq:l2-risk} is known in statistics as the expected $L_2$-risk and it is a well-known fact that its minimizer is the conditional expectation:
    \begin{equation}
        f^*(\mathrm{x})=\mathbb{E}[\pose|\inputs=\mathrm{x}]\,.
    \end{equation}
    Thus, since $\ell_j^2$ are strictly convex and $\prob(\pose|\inputs)$ is non-degenerate according to assumption \ref{asp:non-degen}, we can conclude from Jensen's strict inequality that for all $j$,
    \begin{equation}
        \ell^2_j(f^*(\mathrm{x})) = \ell^2_j(\mathbb{E}[\pose|\inputs=\mathrm{x}]) < \mathbb{E}[\ell^2_j(\pose)|\inputs=\mathrm{x}] = s^2_{j\tau(j)}\,,
    \end{equation}
    where the last equality arises from the fact that $\ell^2_j(\pose)$ is not random according to \eqref{eq:fixed-sed-len}.
    Thus, given that $\ell_j>0$ and $s_{j,\tau(j)}>0$, we can say that ${\ell_j(f^*(\mathrm{x}))} < s_{j,\tau(j)}$ for all joints $j$.
    We conclude that the model $f^*$ minimizing $\text{MSE}^*$ predicts poses that
    violate assumption \ref{asp:rigid-segs}
    and are inconsistent. $\blacksquare$
\end{proof}

As an immediate corollary of proposition \ref{th:mpjpe}, we may state the following result, which was empirically illustrated in many parts of our paper:
\begin{corollary} \label{th:consistent-subopt}
    Given a fixed training distribution $\prob(\inputs, \pose)$ respecting assumptions \ref{asp:root}-\ref{asp:non-degen}, for all 3D-HPE model $f$ predicting consistent poses, \ie, that respect assumption \ref{asp:rigid-segs}, there is an inconsistent model $f'$ with lower mean-squared error.
\end{corollary}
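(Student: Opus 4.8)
The plan is to prove Corollary~\ref{th:consistent-subopt} as a direct consequence of Proposition~\ref{th:mpjpe}, exploiting the fact that the latter already produces the key witness: the oracle MSE minimizer $f^*(\inputs) = \E[\pose \mid \inputs]$. The corollary asserts that \emph{any} consistent model $f$ can be strictly beaten in MSE by some inconsistent model $f'$, so the natural strategy is to exhibit $f'$ explicitly and simply take $f' = f^*$, then argue two things: that $f^*$ is inconsistent, and that it has strictly lower MSE than $f$.

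First I would invoke Proposition~\ref{th:mpjpe} verbatim: under Assumptions~\ref{asp:root}--\ref{asp:non-degen}, the minimizer $f^*$ of the expected $L_2$-risk is inconsistent, since the strict Jensen inequality forces $\ell_j(f^*(\inputs)) < s_{j,\tau(j)}$ for every joint $j$. This immediately supplies an inconsistent candidate for $f'$ without any new computation. Second, I would compare risks. By definition~\eqref{eq:mpjpe-min}, $f^*$ is the global minimizer of $\text{MSE}^*$, so $\text{MSE}^*(f^*) \leq \text{MSE}^*(f)$ for the given consistent $f$. The only subtlety is upgrading this weak inequality to a \emph{strict} one, which is exactly what the corollary requires.

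The strictness is the main obstacle, and I expect it to hinge on the same non-degeneracy used in Proposition~\ref{th:mpjpe}. The point is that $f$ is consistent, hence by~\eqref{eq:fixed-sed-len} its predicted segment lengths equal $s_{j,\tau(j)}$, whereas $f^*$ strictly shrinks every segment length. Therefore $f \neq f^*$ as functions (on the support of $\prob(\inputs)$ where the conditional is non-degenerate). Because the expected $L_2$-risk has a \emph{unique} minimizer up to $\prob(\inputs)$-almost-everywhere equality --- the conditional expectation --- any $f$ differing from $f^*$ on a set of positive measure must incur strictly larger risk. Concretely, I would write the bias--variance-style decomposition
\begin{equation}
    \text{MSE}^*(f) = \text{MSE}^*(f^*) + \E_{\inputs}\big[\,\|f(\inputs) - f^*(\inputs)\|_2^2\,\big]\,,
\end{equation}
which follows from the orthogonality of the residual $\pose - \E[\pose\mid\inputs]$ to any function of $\inputs$. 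Since $f$ and $f^*$ disagree on the positive-measure set where segments differ, the second term is strictly positive, giving $\text{MSE}^*(f^*) < \text{MSE}^*(f)$.

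Setting $f' = f^*$ then completes the argument: $f'$ is inconsistent by Proposition~\ref{th:mpjpe}, and it has strictly smaller mean-squared error than the arbitrary consistent model $f$. The only care needed is to ensure the disagreement set genuinely has positive $\prob(\inputs)$-measure rather than being a null set; this is guaranteed because Assumption~\ref{asp:non-degen} makes the conditional distribution non-degenerate on a set of inputs of positive measure, and on each such input the strict Jensen gap of Proposition~\ref{th:mpjpe} forces $f^*(\inputs) \neq f(\inputs)$. I would flag this measure-positivity check as the one place where an otherwise routine corollary could fail if the ambiguity were confined to a negligible set of inputs.
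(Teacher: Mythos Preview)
Your proposal is correct and takes essentially the same approach as the paper: set $f' = f^*$, invoke Proposition~\ref{th:mpjpe} for its inconsistency, and then upgrade the weak MSE inequality to a strict one. The paper obtains strictness by a short contradiction (if $\text{MSE}^*(f) \leq \text{MSE}^*(f')$ then $f$ is also a minimizer and hence inconsistent by Proposition~\ref{th:mpjpe}, contradicting its assumed consistency), whereas you spell out the orthogonality decomposition $\text{MSE}^*(f) = \text{MSE}^*(f^*) + \E_{\inputs}\|f(\inputs)-f^*(\inputs)\|_2^2$ and verify the disagreement set has positive measure --- a more explicit but equivalent route.
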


\begin{proof}
    Let ${f' \in \argmin_{\tilde{f}} \text{MSE}^*(\tilde{f})}$.
    According to proposition \ref{th:mpjpe}, $f'$ is inconsistent.
    Suppose that the consistent model $f$ is such that
    \begin{equation} \label{eq:consistent-subopt}
        \text{MSE}^*(f) \leq \text{MSE}^*(f') \,.
    \end{equation}
    Since $\text{MSE}^*$ reaches its minimum at $f'$, we have $\text{MSE}^*(f) = \text{MSE}^*(f')$.
    Thus, $f \in \argmin_{\tilde{f}} \text{MSE}^*(\tilde{f})$, which means that $f$ is also inconsistent according to proposition \ref{th:mpjpe}.
    That is impossible given that we assumed $f$ to be consistent. We conclude that \cref{eq:consistent-subopt} is wrong and that
    \begin{equation}
        \text{MSE}^*(f) > \text{MSE}^*(f') \,.
    \end{equation}
    $\blacksquare$
\end{proof}

Note that propositions \ref{th:mpjpe} and \ref{th:consistent-subopt} assume the use of the MSE loss, which is the most widely used loss in 3D-HPE.
We can however extend them to the case where MPJPE serves as optimization criteria under an additional technical assumption:
\begin{corollary}
    The predicted poses minimizing the mean-per-joint-position-error loss are inconsistent if the training poses distribution $\prob(\inputs, \pose)$ verifies \assump~\ref{asp:root}-\ref{asp:non-degen}
    and if the joint-wise residuals' norm standard deviation is small compared to the joint-wise loss:
    \begin{equation}
        0 \leq j < J \,, \quad
        \frac{\sqrt{\mathbb{V}_{\inputs,\pose}\big[\| \pose_j - f_j(\inputs) \|_2\big]}}{\E_{x,\pose}\big[ \| \pose_j - f_j(\inputs) \|_2 \big]} \simeq 0 \,.
        \label{eq:mpjpe_mse}
    \end{equation}
\end{corollary}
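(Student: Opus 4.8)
The plan is to reduce the statement to Proposition~\ref{th:mpjpe} by showing that, under the stated variance condition, the minimizer of the MPJPE loss $\text{MPJPE}^*(f) = \E_{\inputs,\pose}[\frac{1}{J}\sum_j \|\pose_j - f_j(\inputs)\|_2]$ is $L^2$-close to the minimizer of the MSE, namely the conditional mean $\mu_j(\inputs) \triangleq \E[\pose_j \mid \inputs]$. The difficulty, compared to Proposition~\ref{th:mpjpe}, is that the MPJPE uses the Euclidean norm rather than its square, so its joint-wise minimizer is the \emph{conditional spatial median} rather than the conditional mean, and the strict Jensen step no longer applies directly. The variance hypothesis is precisely the tool I would use to bridge this gap, and controlling that median is the main obstacle.

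First I would note that, exactly as for the MSE, the MPJPE objective decouples across joints and inputs, so the minimizer $f$ satisfies $M_j(f) \le M_j(g)$ for every competitor $g$ and every joint $j$, where $M_j(g) \triangleq \E[\|\pose_j - g_j(\inputs)\|_2]$; likewise $\mu$ minimizes the joint-wise MSE $\text{MSE}_j(g) \triangleq \E[\|\pose_j - g_j(\inputs)\|_2^2]$. Writing $r_j \triangleq \|\pose_j - f_j(\inputs)\|_2$ and letting $\text{CV}_j$ denote the coefficient of variation appearing in the hypothesis, the elementary identity $\E[r_j^2] = \E[r_j]^2(1 + \text{CV}_j^2)$ relates $\text{MSE}_j(f)$ to its squared MPJPE $M_j(f)^2$.

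The core of the argument is a sandwich on $\text{MSE}_j(f)$. On one hand, $\mu$ minimizes $\text{MSE}_j$, so $\text{MSE}_j(f) \ge \text{MSE}_j(\mu)$. On the other hand, optimality of $f$ for the MPJPE together with Cauchy--Schwarz gives $M_j(f) \le M_j(\mu) \le \sqrt{\text{MSE}_j(\mu)}$, whence $\text{MSE}_j(f) = M_j(f)^2(1+\text{CV}_j^2) \le \text{MSE}_j(\mu)(1+\text{CV}_j^2)$. Combining,
\[
    0 \le \text{MSE}_j(f) - \text{MSE}_j(\mu) \le \text{MSE}_j(\mu)\,\text{CV}_j^2 .
\]
I would then invoke the Pythagorean identity for the conditional mean, $\text{MSE}_j(f) - \text{MSE}_j(\mu) = \E[\|f_j(\inputs) - \mu_j(\inputs)\|_2^2]$ (the cross term vanishes by the tower property), to deduce $\E[\|f_j - \mu_j\|_2^2] \le \text{MSE}_j(\mu)\,\text{CV}_j^2$, which tends to $0$ as $\text{CV}_j \to 0$.

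Finally, $L^2$ convergence $f_j \to \mu_j$ yields a.e.\ convergence along a subsequence, so the continuous segment length satisfies $\ell_j(f(\inputs)) \to \ell_j(\mu(\inputs))$. The Jensen step of Proposition~\ref{th:mpjpe} already gives $\ell_j(\mu(\inputs)) < s_{j,\tau(j)}$ for a.e.\ input; hence for $\text{CV}_j$ small enough the MPJPE minimizer predicts strictly shortened, input-dependent segments, violating Assumption~\ref{asp:rigid-segs}, i.e.\ it is inconsistent. The whole role of the variance hypothesis is thus to convert near-optimality for the (non-strictly-convex) MPJPE into near-optimality for the strictly convex MSE, which then forces proximity to the conditional mean and lets the earlier argument carry over.
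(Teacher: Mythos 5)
Your proof is correct, and it takes a genuinely different — and more rigorous — route than the paper's. The paper's own proof stays entirely at the level of objectives: it rewrites the coefficient-of-variation hypothesis as $\big(\mathrm{MSE}_j^*(f) - \mathrm{MPJPE}_j^*(f)^2\big)/\mathrm{MPJPE}_j^*(f)^2 \simeq 0$ and then asserts that the two criteria are ``asymptotically equivalent and have the same minimizer,'' which is inconsistent by Proposition~\ref{th:mpjpe}. That last step is heuristic: approximate equality of two functionals does not by itself imply that their minimizers coincide or are even close, and the paper never confronts the fact that the joint-wise MPJPE minimizer is a conditional \emph{spatial median}, not the conditional mean. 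You identify exactly this gap and close it quantitatively: the sandwich $0 \le \mathrm{MSE}_j(f) - \mathrm{MSE}_j(\mu) \le \mathrm{MSE}_j(\mu)\,\mathrm{CV}_j^2$ (combining optimality of $f$ for the MPJPE, optimality of the conditional mean $\mu$ for the MSE, and Cauchy--Schwarz), together with the Pythagorean identity for conditional expectations, yields the explicit bound $\E\big[\|f_j - \mu_j\|_2^2\big] \le \mathrm{MSE}_j(\mu)\,\mathrm{CV}_j^2$, after which continuity of $\ell_j$ and the strict Jensen inequality from Proposition~\ref{th:mpjpe} transfer inconsistency to $f$. The paper's version buys brevity and conveys the intuition that the two losses merge when residual norms concentrate; yours buys an actual argument in which the ``$\simeq 0$'' hypothesis does precise work. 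The one residual informality on your side is the final step: almost-everywhere convergence along a subsequence gives strict segment shortening only eventually and input-by-input, not uniformly; but since inconsistency only requires violating Assumption~\ref{asp:rigid-segs} on a set of positive measure, this is harmless, and it merely mirrors the inherent vagueness of the ``$\simeq$'' in the statement itself.
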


\begin{proof}
    From proposition \ref{th:mpjpe} we know that the poses predicted by the minimizer $f^*$ of
    \begin{equation}
        \text{MSE}^*(f)= \E_{\inputs,\pose}\big[\| \pose - f(\inputs) \|^2_2\big]
    \end{equation}
    are inconsistent.
    Let $f_j$ be the
    component of $f$ corresponding to the $j^{\text{th}}$ joint.
    We define the $j^{\text{th}}$ mean-per-joint-position-error component as:
    \begin{align}
        \text{MPJPE}_j^*(f) \triangleq  \E_{\inputs,\pose}\big[\| \pose_j - f_j(\inputs) \|_2\big] \,.
    \end{align}
     Under the small variance assumption, we have:
     \begin{align}
         &\frac{\mathbb{V}_{\inputs,\pose}\big[\| \pose_j - f_j(\inputs) \|_2\big]}{\E_{x,\pose}\big[ \| \pose_j - f_j(\inputs) \|_2 \big]^2} \\
         &\quad = \frac{\E_{\inputs,\pose}\big[\| \pose - f(\inputs) \|^2_2\big] - \E_{\inputs,\pose}\big[\| \pose_j - f_j(\inputs) \|_2\big]^2}{\E_{x,\pose}\big[ \| \pose_j - f_j(\inputs) \|_2 \big]^2} \\
         &\quad = \frac{\text{MSE}_j^*(f) - \text{MPJPE}_j^*(f)^2}{\text{MPJPE}_j^*(f)^2} \simeq 0 \,,
     \end{align}
     so both criteria, MSE and MPJPE, are asymptotically equivalent and have the same minimizer $f^*$, which is inconsistent according to proposition \ref{th:mpjpe}.~$\blacksquare$
\end{proof}

\CR{
\begin{corollary} \label{th:multi-opti}
    Under Asm. \ref{asp:rigid-segs}-\ref{asp:non-degen} and under \eqref{eq:mpjpe_mse}, the only way to get both optimal MPJPE and consistency is to use multiple hypotheses.
\end{corollary}
\begin{proof}
    Corollary \ref{th:consistent-subopt} and Proposition \ref{th:mpjpe} imply that single-hypothesis models (constrained or not) deliver either suboptimal MPJPE or inconsistent pose predictions. Hence, by negation, we get our result. \qed
\end{proof}
}

\VL{
In the next section, we further show that multi-hypotheses models, constrained or not, can theoretically show a better $L2$-risk (or \textit{quantization}) performance compared with single-hypotheses models.
}

\subsection{Multiple hypotheses (constrained or not) can improve L2-risk over single-hypothesis models}
\label{sec:l2_risk}

Let $\X=\R^{2 \times J}$ denote the space of input 2D poses and $\Poses=\R^{3 \times J}$ the space of 3D poses.
Also, let $\risk (f)=\E_{\inputs, \pose}[\| \pose - f(\inputs)\|^2_2]$ be the $L2$-risk of some pose estimator $f$ under some underlying continuous joint distribution of 2D-3D pose pairs $\prob(\inputs,\pose)$, with density $\rho$ (when it exists).

Before stating the proposition, we need to define an adapted notion of risk for multi-hypothesis models under the oracle aggregation scheme:

\begin{definition}[Winner-takes-all risk, \cite{rupprecht_learning_2017}]
    As in \cite{rupprecht_learning_2017} (section 3.2) and in \cite{letzelter2024winner} (section 2.2), we define the $L2$-risk for $K$-head models $\fwta = (\fwta^1, \dots, \fwta^K)$ as:
    \begin{equation} \label{eq:wta_risk}
        \riskwta{K} (\fwta) \triangleq
        \int_\X
        \sum_{k=1}^{K} \int_{\vcellf} \|\fwta^k(\inputs) - \pose \|^2_2 \rho(\inputs, \pose) \diff \pose \diff \inputs \,,
    \end{equation}
    where $\vcellg$'s denotes the $k^{th}$ cell of the Voronoi tesselation of the output space $\Poses$ defined by generators $g=(g^1, \dots, g^K) \in \Poses^K$:
    \begin{equation} \label{eq:voronoi-cell}
        \vcellg \triangleq \left\{ \pose \in \Poses \;|\; \| g^k - \pose \|^2_2 < \| g^r, - \pose \|^2_2, \forall r \neq k \right\}\,.
    \end{equation}
\end{definition}
The risk above translates the notion of oracle pose, since it partitions the space of ground-truth poses $\Poses$ into regions where some hypothesis is the closest, and uses only that hypothesis to compute the risk in that region. Note that $\riskwta{1}(f) = \risk(f)$ for any function $f$, since a single-cell tessellation of $\Poses$ is $\Poses$ itself.

In the following, we assume that $f$ is expressive enough, so that, minimizing the risk \eqref{eq:wta_risk} comes down to minimizing
$$\sum_{k=1}^{K} \int_{\vcellf} \|\fwta^k(\inputs) - \pose \|^2_2 \rho(\inputs, \pose) \diff \pose \;,$$
for each $\inputs \in \mathcal{X}$.

\begin{proposition}[Optimality of manifold constrained multi-hypothesis models]
A $K$-hypotheses model $\fwta^*=(\fwta^{1,*}, \dots, \fwta^{K,*})$ minimizing \eqref{eq:wta_risk} has always a risk lower or equal to a single-hypothesis model $\fmse^*$ minimizing $\risk$:
\begin{equation}
    \riskwta{K} (\fwta^*) \leq \riskwta{1}(\fmse^*) = \risk (\fmse^*) \,.
\end{equation}

\end{proposition}

\begin{proof}
Following \cite{letzelter2024winner} (Section 2.2), we decouple the cell generators from the risk arguments in \eqref{eq:wta_risk}:
\begin{equation}
    \K(g, z) \triangleq
    \sum_{k=1}^{K} \int_{\vcellg} \|z^k - \pose \|^2_2 \rho(\pose | \inputs) \diff \pose\,,
\end{equation}
for any generators $g=(g^1, \dots, g^K) \in \Poses^K$ and arguments $z=(z^1, \dots, z^K) \in \Poses^K$.
Note that $\riskwta{K}(f) = \int_\X \K(f(\inputs), f(\inputs)) \rho(\inputs) \diff \inputs$.

According to Proposition 3.1 of \cite{du1999centroidal} (or Proposition 2.1 in \cite{letzelter2024winner}), if $\fwta^*$ minimizes $\riskwta{K}$, then $(\fwta^*(\inputs), \fwta^*(\inputs))$ has to minimize $\K$ for all $\inputs \in \X$:
\begin{equation}
    \K(\fwta^*(\inputs), \fwta^*(\inputs)) \leq \K(g, z), \qquad \forall g, z \in \Poses^K \times \Poses^K.
\end{equation}
Let's choose $g$ such that $g^k=\fwta^{k,*}(x)$ and $z$ such that $z^k = \fmse^*(x)$ for all $1 \leq k \leq K$.
Then
\begin{equation}
    \riskwta{K}(f_*^1, \dots, f_*^K) \leq \int_\X
    \sum_{k=1}^{K} \int_{\vcellopt} \|\fmse^*(\inputs) - \pose \|^2_2 \rho(\pose | \inputs) \rho(\inputs) \diff \pose \diff \inputs = \risk(\fmse^*) \,,
\end{equation}
where the last equality comes from the fact that $\vcellopt$ defines a partition of $\Poses$.

\end{proof}

\section{Further details of 1D-to-2D case study}
\label{app:toy}
\subsection{Implementation details}
\label{app:toy-implem-details}

\myparagraph{Datasets.}
We created a dataset of input-output pairs $\{(x_i, (x_i, y_i))\}_{i=1}^N$, divided into $1\,000$ training examples, $1\,000$ validation examples and $1\,000$ test examples.
Since the 2D position of $J_1$ is fully determined by the angle $\theta$ between the segment $(J_0,J_1)$ and the $x$-axis, the dataset is generated by first sampling $\theta$ from a von Mises mixture distribution, then converting it into Cartesian coordinates $(x_i,y_i)$ to form the outputs, and finally projecting them into the $x$-axis to obtain the inputs.

\myparagraph{Distribution scenarios.}
We considered three different distribution scenarios with different levels of difficulty:
\begin{enumerate}
    \item \textbf{Easy scenario}: a unimodal distribution centered at ${\theta=\frac{2\pi}{5}}$, where the axis of maximum 2D variance is approximately parallel to the $x$-axis (\cref{fig:toy-results}-A).
    \item \textbf{Difficult unimodal scenario}: a unimodal distribution centered at $\theta=0$, where the axis of maximum 2D variance is perpendicular to the $x$-axis (\cref{fig:toy-results}-B).
    \item 
    \textbf{Difficult multimodal scenario}: a  bimodal distribution, with modes at $\theta_1=\frac{\pi}{3}$ and $\theta_2=-\frac{\pi}{3}$ and mixture weights $w_1=\frac{2}{3}$ and $w_2=\frac{1}{3}$, \ie, where the projection of modes onto the $x$-axis are close to each other (\cref{fig:toy-results}-C).
\end{enumerate}
All von Mises components in all scenarios had concentrations equal to $20$.

\myparagraph{Architectures and training.}
All three models were based on a multi-layer perceptron (MLP) with 2 hidden layers of $32$ neurons each, using \texttt{tanh} activation.

The constrained and unconstrained MLPs were trained using the mean-squared loss $\frac{1}{N} \sum_{i=1}^N ((\hat{x}_i - x_i)^2 + (\hat{y}_i - y_i)^2)$.
\ours was trained with the loss in \cref{eq:tot-loss}, and had $K=2$ heads.
We trained all models with batches of $100$ examples for a maximum of $50$ epochs.
We used the Adam optimizer~\cite{kingma2014adam}, with default hyperparameters and no weight decay.
Learning rates were searched for each model and distribution independently over a small grid: $[10^{-5}, 10^{-4}, 10^{-3}, 10^{-2}]$ (\cf selected values in
\cref{tab:toy-lrs}). They were scheduled during training using a plateau strategy of factor $0.5$, patience of $10$ epochs and threshold of $10^{-4}$.

\begin{table}[!ht]
    \centering
    \caption{\textbf{Selected learning rates for 1D-to-2D synthetic experiment.}}
        \begin{tabular}{l|ccc}
            \toprule
            Distribution & A & B & C \\
            \midrule
            Unconstr. MLP & $10^{-3}$ & $10^{-3}$ & $10^{-2}$ \\
            Constrained MLP & $10^{-2}$ & $10^{-4}$ & $10^{-2}$ \\
            \ours & $10^{-2}$ & $10^{-3}$ & $10^{-2}$ \\
            \bottomrule
        \end{tabular}
    \label{tab:toy-lrs}
\end{table}

\subsection{Extension to 2D-to-3D setup with more joints}

We further extend the two-joint 1D-to-2D lifting experiment of \cref{sec:toy-exp} to 2D-to-3D with three joints, aiming at providing a scenario that is closer to real-world 3D-HPE, but that can still be fully dissected and visualized.

As in \cref{sec:toy-exp}, we suppose that joint $J_0$ is at the origin at all times, that $J_1$ is connected to $J_0$ through a rigid segment of length $s_1$ and that $J_2$ is connected to $J_1$ through a second rigid segment of length $s_1 < s_0$.
We further assume that both $J_1$ and $J_2$ are allowed to rotate around two axes orthogonal to each other.
Thus, $J_1$ is constrained to lie on a circle $S^1(0, s_0)$, while $J_2$ lies on a torus $\mathcal{T}$ homeomorphic to $S^1(0, s_0) \otimes S^1(0, s_1)$.
Without loss of generality, we set the radii $s_0=2$ and $s_1=1$ and assume them to be known.

Given that setup, we are interested 
in
learning to predict the 3D pose ${(J_1, J_2)=(x_1, y_1, z_1, x_2, y_2, z_2) \in \R^6}$, given its 2D projection ${(K_1, K_2)=(x_1, z_1, x_2, z_2) \in \R^4}$.
We create a dataset comprising $20 000$ training, $2 000$ validation, and $2 000$ test examples, sampled using an arbitrary von Mises mixture of poloidal and toroidal angles $(\theta, \phi)$ in $\mathcal{T}$.
We set the modes of such a mixture at $[(-\pi, 0), (0,\pi/4),(\frac{1}{2}, -\pi/4), (2\pi/3,\pi/2)]$, with concentrations of $[2, 4, 3, 10]$ and weights $[0.3, 0.4,0.2,0.1]$.
Similarly to \cref{fig:toy-results}-C, that creates a difficult multimodal distribution, depicted in \cref{fig:torusfig}.

\begin{figure}[ht]
    \centering
    \subfloat[]{
        \includegraphics[height=1.45in]{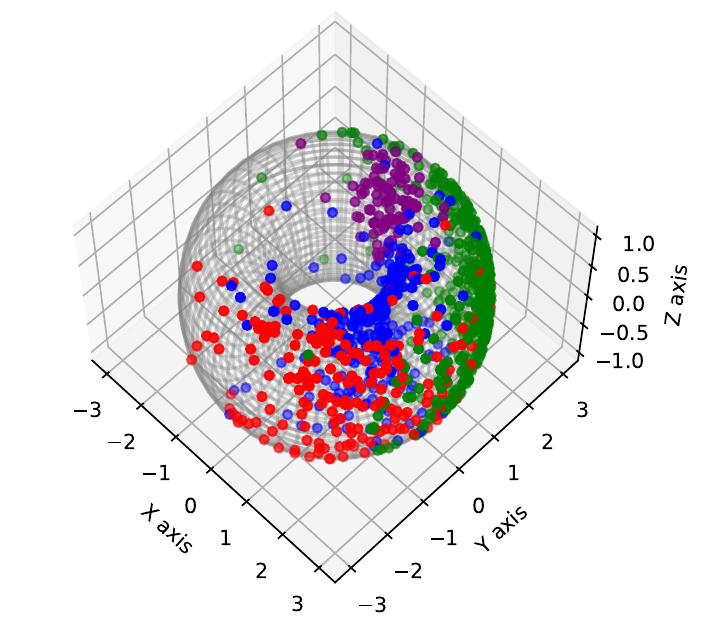}
    }
    \subfloat[]{
        \includegraphics[height=1.45in]{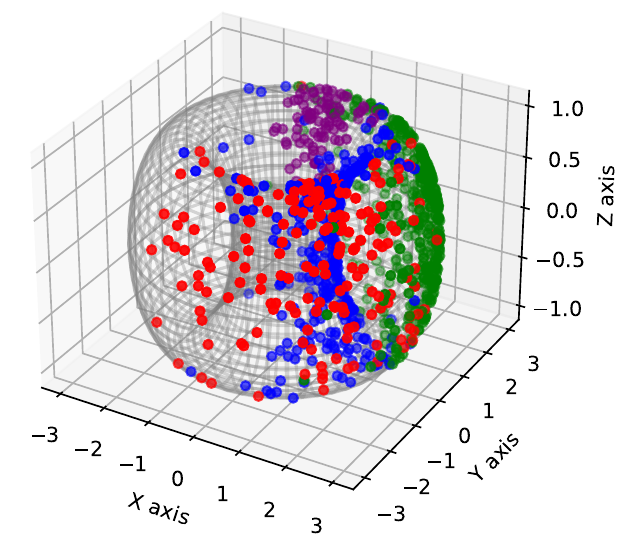}
    }
    \caption{
    \textbf{Visualisation of the von Mises mixture distribution on the torus $T$.}
    The different colors (\textcolor{blue}{blue}, \textcolor{ForestGreen}{green}, \textcolor{BrickRed}{red}, \textcolor{DarkOrchid}{purple}) represent the modes of the sampled points.
    We are only representing joint $J_2$ here for clarity.
    }
    \label{fig:torusfig}
\end{figure}

We train and evaluate the same baselines as in \cref{sec:toy-exp} in that new scenario, using a similar setup (\cf \cref{app:toy-implem-details}, Architectures and training). \VL{Note that for those experiments, we used an initial learning rate of $10^{-3}$ for each baseline, and a batch size of $1000$ examples.} The corresponding Mean Per Segment Consistency Error (MPSCE) and Mean Per Joint Position Error (MPJPE) results are reported in \cref{tab:3d_toy_results}.

\begin{table}[ht]
    \centering
    \caption{
    \textbf{Mean per joint prediction error (MPJPE) and mean per segment consistency error (MPSCE) in a 2D-to-3D scenario.} Results are averaged over five random seeds. \ours reaches perfect MPSCE consistency without degrading MPJPE performance.
    }
    \vspace{1mm}
    \resizebox{0.5\columnwidth}{!}{
        \begin{tabular}{lcc}
            \toprule
            & MPJPE $\downarrow$ & MPSCE $\downarrow$ \\ \midrule
            Unconst. MLP & 1.152 $\pm$ 0.021 & 0.269 $\pm$ 0.018\\
            Constrained MLP & 1.166 $\pm$ 0.028 & \textbf{0.000 $\pm$ 0.000}\\
            \ours & \textbf{1.149 $\pm$ 0.036} & \textbf{0.000 $\pm$ 0.000}\\
            \bottomrule
        \end{tabular}
    }
    \label{tab:3d_toy_results}
\end{table}

We see that the same observations as in \cref{sec:toy-exp} also apply here: although the unconstrained MLP yields competitive MPJPE results, its predictions are not consistently aligned with the manifold, as indicated by its poor MPSCE performance.
Again, we show here that \ours offers an effective balance between maintaining manifold consistency and achieving high joint-position-error performance.

\section{Further \ours implementation details}
\label{app:implem-details}

\subsection{Architectural details}

Our architecture is backbone-agnostic, as shown on \cref{fig:arch-overview}.
Thus, in order to have a fair comparison, we decided to implement it using the most powerful architecture available, \ie, MixSTE~\cite{zhang_mixste_2022}.

In practice, the rotations module follows the MixSTE architecture with $d_l=8$ spatio-temporal transformer blocks of dimension $d_m=512$ and time receptive field of $T=243$ frames for Human\,3.6M experiments and $T=43$ frames for MPI-INF-3DHP experiments.
Contrary to MixSTE, that network outputs rotation embeddings of dimension $6$ for each joint and frame, instead of Cartesian coordinates of dimension $3$.

Concerning the segment module, it was also implemented with a smaller MixSTE backbone of depth $d_l=2$ and dimension $d_m=128$.

The ablation study presented in \cref{tab:ablation} shows that the increase in the number of parameters between MixSTE and \ours is negligible.

\subsection{Pose decoding details}

The pose decoding block from \cref{fig:arch-overview} is described in \cref{sec:man-constrained} and is based on \cref{alg:rot-rep,alg:fk}.
The whole procedure is illustrated on \cref{fig:pose-decoder}.

\begin{table*}[ht]
    \centering
    \caption{Joint-wise weights used in the Winner-takes-all loss \cref{eq:wtaloss} (as in \cite{zhang_mixste_2022}).}
    \resizebox{0.7\textwidth}{!}{
    \begin{tabular}{l|cccccccccccccccccccc}
        \toprule
        Joint & 0 & 1 & 2 & 3 & 4 & 5 & 6 & 7 & 8 & 9 & 10 & 11 & 12 & 13 & 14 & 15 & 16 \\ \midrule
        Weight & 1 & 1 & 2.5 & 2.5 & 1 & 2.5 & 2.5 & 1 & 1 & 1 & 1.5 & 1.5 & 4 & 4 & 1.5 & 4 & 4\\
        \bottomrule
    \end{tabular}
    }
    \label{tab:loss_weights}
\end{table*}

\begin{algorithm}[ht]
    \caption{6D rotation representation conversion~\cite{zhou_continuity_2019}} \label{alg:rot-rep}
    \begin{algorithmic}[1]
        \Require Predicted 6D rotation representation $r \in \R^6$.
        \State $x' \gets [r_0, r_1, r_2]\,,$
        \State $y' \gets [r_3, r_4, r_5]\,,$
        \State $x \gets x' / \|x'\|_2 \,,$
        \State $z' \gets x \wedge y' \,,$
        \State $z \gets z' / \|z'\|_2 \,,$
        \State $y \gets z \wedge x \,,$
        \State \Return $R=[x|y|z] \in \R^{3 \times 3} \,.$
    \end{algorithmic}
\end{algorithm}

\begin{algorithm}[ht]
    \caption{Forward Kinematics~\cite{murray_mathematical_2017,li_hybrik_2021}} \label{alg:fk}
    \begin{algorithmic}[1]
        \Require Scaled reference pose $u' \in (\R^3)^J$, predicted rotation matrices $R_{t,j}$, $0 \leq j < J$.
        \State $R'_{t,0} \gets R_{t,0}$ \,,
        \State $\pose_{t,0} \gets \refposes_0$ \,,
        \For{$j=1, \dots, J - 1$}
            \State $R'_{t,j} \gets R_{t,j} R'_{t,\tau(j)}$ \,, \Comment{Compose relative rotations}
            \State{$\pose_{t,j} \gets R'_{t,j}(\refposes_j - \refposes_{\tau(j)}) + \pose_{t, \tau(j)}$} \,,
        \EndFor
    \State \Return $\pose_t=[\pose_{t,j}]_{0 \leq j < J}$
    \end{algorithmic}
\end{algorithm}

\subsection{Training details} \label{app:training-details}
\paragraph{Training tactics.}
In order to have a fair comparison with MixSTE~\cite{zhang_mixste_2022}, we trained \ours using the same training tactics,
such as
pose flip augmentation both at training and test time.
Moreover, the training loss \eqref{eq:tot-loss} was complemented with two additional terms described in \cite{zhang_mixste_2022}:
\begin{enumerate}
    \item a TCloss term, initially introduced in \cite{hossain_exploiting_2018};
    \item a velocity loss term, introduced in \cite{pavllo_3d_2019}.
\end{enumerate}
We also weighted the Winner-takes-all MPJPE loss \eqref{eq:wtaloss} as in \cite{zhang_mixste_2022} (\cf weights in \cref{tab:loss_weights}).
The score loss weight, $\beta$, was set to $0.1$ according to our hyperparameter study (\cref{app:more-h36m-res}), while TCloss and velocity loss terms had respective weights of $0.5$ and $2$ (values from \cite{zhang_mixste_2022}).

\myparagraph{Training settings.}
We trained our model for a maximum of $200$ epochs with the Adam optimizer~\cite{kingma2014adam}, using default hyperparameters, a weight decay of $10^{-6}$ and an initial learning rate of $4 \times 10^{-5}$. The latter was reduced with a plateau scheduler of factor $0.5$, patience of $11$ epochs and threshold of $0.1$ mm.
Batches contained $3$ sequences of $T=243$ frames each for the Human\,3.6M training, and $30$ sequences of $T=43$ frames for MPI-INF-3DHP.

\paragraph{Compute resources.}
Trainings were carried out on a single NVIDIA RTX 2000 GPU with around 11GB of memory. The training of the large model with 243 frames on Human\,3.6M dataset took around 26 hours.

\paragraph{Dataset licences.}
Human\,3.6M is a dataset released under a research-only custom license, and is available upon request at this URL: \url{http://vision.imar.ro/human3.6m/description.php}.
MPI-INF-3DHP is released under non-commercial custom license and can be found at: \url{https://vcai.mpi-inf.mpg.de/3dhp-dataset/}.

\subsection{Baselines evaluation.}
All Human\,3.6M evaluations of MPSSE and MPSCE listed in \cref{tab:consistency,tab:ablation} were performed using the official checkpoints of these methods and their corresponding official evaluation scripts.
Concerning MPI-INF-3DHP evaluations from \cref{tab:mpi-inf}, checkpoints were not available (except for P-STMO).
Thus, baseline models were retrained from scratch using the official MPI-INF-3DHP training scripts provided by the authors of each work, using hyperparameters reported in their corresponding papers.
We checked that we were able to reproduce the reported MPJPE results.

\section{Further results on the Human\hspace{0.2em}3.6M dataset}
\label{app:more-h36m-res}

\begin{table*}[htp]
  \normalsize
  \centering
  \caption
  {
    \textbf{Quantitative comparison with the state-of-the-art methods on Human3.6M under Protocol \#1 (MPJPE in mm), using detected 2D poses.}
    $T$: sequence length. $K$: number of hypotheses.
    Orac.: Metric computed using oracle hypothesis.
    \textbf{Bold}: best;
    \underline{Underlined}: second best.
  }
  \resizebox{\textwidth}{!}{
  \begin{tabular}{@{}lccc|ccccccccccccccc|c@{}}
  \toprule[1pt]
   & $T$ & $K$ & Orac. & Dir. & Disc & Eat & Greet & Phone & Photo & Pose & Purch. & Sit & SitD. & Smoke & Wait & WalkD. & Walk & WalkT. & Avg.\\
  \midrule[0.5pt]

\multicolumn{3}{l}{\textit{Single-hypothesis methods:}} & \multicolumn{1}{c}{} & & & & & & & & & & & & & & &\multicolumn{1}{c}{} &  \\

  GraphSH~\cite{xu_graph_2021} &
  1 & 1 &
  &45.2 &49.9 &47.5 &50.9 &54.9 &66.1 &48.5 &46.3 &59.7 &71.5 &51.4 &48.6 &53.9 &39.9 &44.1 &51.9 \\

  MGCN~\cite{zou_modulated_2021} &
  1 & 1 &
  &45.4 &49.2 &45.7 &49.4 &50.4 &58.2 &47.9 &46.0 &57.5 &63.0 &49.7 &46.6 &52.2 &38.9 &40.8 &49.4 \\

  ST-GCN~\cite{cai_exploiting_2019}&
  7 & 1 &
  &44.6 &47.4 &45.6 &48.8 &50.8 &59.0 &47.2 &43.9&57.9 &61.9 &49.7 &46.6 &51.3 &37.1 &39.4 &48.8 \\

  VideoPose3D~\cite{pavllo_3d_2019}&
  243 & 1 &
  & 45.2 & 46.7 & 43.3 & 45.6 & 48.1 & 55.1 & 44.6 & 44.3 & 57.3 & 65.8 & 47.1 & 44.0 & 49.0 & 32.8 & 33.9 & 46.8 \\


  UGCN~\cite{wang_motion_2020}&
  96 & 1 &
  &41.3 &43.9 &44.0 &42.2 &48.0 &57.1 &42.2 &43.2 &57.3 &61.3 &47.0 &43.5 &47.0 &32.6 &31.8 &45.6 \\

  Liu \emph{et al.}~\cite{liu2020attention}&
  243 & 1 &
  &41.8 &44.8 &41.1 &44.9 & 47.4 &54.1 &43.4 &42.2 &56.2 &63.6 &45.3 & 43.5 & 45.3 &31.3 &32.2 &45.1 \\

  PoseFormer~\cite{zheng_3d_2021}&
  81 & 1 &
  &41.5 &44.8 & 39.8 &42.5 & 46.5 & 51.6 &42.1 &42.0 & 53.3 &60.7 &45.5 &43.3 &46.1 &31.8 &32.2 &44.3 \\

  Anatomy3D~\cite{chen_anatomy-aware_2021}&
  243 & 1 &
  &41.4 &43.2 &40.1 &42.9 &46.6 &51.9 &41.7 &42.3 &53.9 &60.2 &45.4 &41.7 &46.0 &31.5 &32.7 & 44.1 \\

  MixSTE~\cite{zhang_mixste_2022}&
  243 & 1 &
  & {37.6}    & \underline{40.9} & {37.3} & {39.7} & {42.3} & {49.9}    & {40.1}    & {39.8} & {51.7}          & {55.0} & {42.1} & {39.8} & \underline{41.0} & {27.9} & \underline{27.9} & {40.9} \\
  \midrule[0.5pt]


  \multicolumn{3}{l}{\textit{Multi-hypothesis methods:}} & \multicolumn{1}{c}{} & & & & & & & & & & & & & & &\multicolumn{1}{c}{} &  \\

  Li \etal~\cite{li2020weakly}&
  1 & 10 & \cmark
  &{62.0} & {69.7} & {64.3} & {73.6} & {75.1} & {84.8} & {68.7} & {75.0} & {81.2} & {104.3} & {70.2} & {72.0} & {75.0} & {67.0} & {69.0} & {73.9} \\

  Li \etal~\cite{li2019generating}&
  1 & 5 & \cmark
  & 43.8 & 48.6 & 49.1 & 49.8 & 57.6 & 61.5 & 45.9 & 48.3 & 62.0 & 73.4 & 54.8 & 50.6 & 56.0 & 43.4 & 45.5 & 52.7\\

  Oikarinen \etal \cite{oikarinen2021graphmdn}&
  1 & 200 & \cmark
  &40.0 & {43.2} & {41.0} & {43.4} & {50.0} & {53.6} & {40.1} & {41.4} & {52.6} & 67.3 & {48.1} & {44.2} & {44.9} & 39.5 & {40.2} & {46.2}\\

  Sharma \etal \cite{sharma2019monocular}&
  1 & 10 & \cmark
  &{37.8} & {43.2} & 43.0 & 44.3 & 51.1 & {57.0} & \underline{39.7} & {43.0} & {56.3} & {64.0} & {48.1} & {45.4} & {50.4} & {37.9} & {39.9} & 46.8\\

  Wehrbein \etal \cite{wehrbein2021probabilistic}&
  1 & 200 & \cmark
  & 38.5 & {42.5} & 39.9 & {41.7} & {46.5} & 51.6 & 39.9 & 40.8 & {49.5} & {56.8} & 45.3 & 46.4 & 46.8 & 37.8 & 40.4 & 44.3 \\

  DiffPose~\cite{holmquist2023diffpose}&
  1 & 200 & \cmark
  & 38.1 & 43.1 & \textbf{35.3} & 43.1 & 46.6 & \underline{48.2} & \textbf{39.0} & \underline{37.6} & 51.9 & 59.3 & {41.7} & 47.6 & 45.4 & {37.4} & {36.0} & {43.3} \\

  MHFormer~\cite{li2022mhformer}&
  351 & 3 &
  &{39.2} &{43.1} &40.1 &{40.9} &44.9 &51.2 &{40.6} &41.3 &53.5 &60.3 &43.7 &41.1 &{43.8} &{29.8} &{30.6} & 43.0 \\

  D3DP~\cite{shan2023diffusion}&
  243 & 20 & \cmark &
  \underline{37.3}& \textbf{39.4}&\underline{35.4}&\underline{37.8}&\underline{41.3}&\textbf{48.1}&\textbf{39.0}&{37.9}&{49.8}&\underline{52.8}&\underline{41.1}&\underline{39.0}&\textbf{39.4}&{27.3}&\textbf{27.2}&\underline{39.5}
\\
  \midrule[0.5pt]
  \ours (Ours)&
  243 & 5 & \xmark
  & 39.6 &        45.8 &    41.9 &      37.1 &     42.7 &   52.3 &    47.7 &       39.5 &     \underline{42.7} &         53.3 &     42.6 &     40.9 &     48.2 &     \underline{27.0} &          30.0 &     42.1 \\
  \ours (Ours)&
  243 & 5 & \cmark
  & \textbf{36.0} & 41.5 & 38.9 & \textbf{34.5} & \textbf{39.6} & 48.5 & 42.7 &       \textbf{37.4} &     \textbf{39.8} &         \textbf{50.0} &     \textbf{40.2} &     \textbf{37.7} &     45.3 &     \textbf{25.9} &          28.6 &     \textbf{39.1} \\
  \bottomrule[1pt]
  \end{tabular}
  }
  \vspace{1mm}
  \label{table:h36m}
\end{table*}

\begin{table*}[t]
  \normalsize
  \centering
  \caption
  {
    \textbf{Quantitative comparison with the state-of-the-art methods on Human3.6M under Protocol \#2 (P-MPJPE in mm), using detected 2D poses.}
    \textbf{Bold}: best;
    \underline{Underlined}: second best.
    ManiPose results using the oracle evaluation. Actions: \texttt{Directions, Discussion, Eating, Greeting, Talking on the Phone, Taking photo, Posing, Makes purchases, Sitting on chair, Activities while seated, Smoking, Waiting, Walking dog, Walking, Walking together.}
  }
  \resizebox{\textwidth}{!}{
  \begin{tabular}{@{}lcc|ccccccccccccccc|c@{}}
  \toprule[1pt]
   & $T$ & $K$ & Dir. & Disc & Eat & Greet & Phone & Photo & Pose & Purch. & Sit & SitD. & Smoke & Wait & WalkD. & Walk & WalkT. & Avg.\\
  \midrule[0.5pt]
    MGCN~\cite{zou_modulated_2021}&
    1 & 1
    & 35.7 & 38.6 & 36.3 & 40.5 & 39.2 & 44.5 & 37.0 & 35.4 & 46.4 & 51.2 & 40.5 & 35.6 & 41.7 & 30.7 & 33.9 & 39.1 \\
    ST-GCN~\cite{cai_exploiting_2019}&
    1 & 1
    &35.7 & 37.8 & 36.9 & 40.7 & 39.6 & 45.2 & 37.4 & 34.5 & 46.9 & 50.1 & 40.5 & 36.1 & 41.0 & 29.6 & 33.2 & 39.0          \\
    Pavllo \etal~\cite{pavllo_3d_2019}&
    243 & 1
    & 34.2 & 36.8 & 33.9 & 37.5 & 37.1 & 43.2 & 34.4 & 33.5 & 45.3 & 52.7 & 37.7 & 34.1 & 38.0 & 25.8 & 27.7 & 36.8          \\
    Zheng \etal~\cite{zheng_3d_2021}&
    81 & 1
    & 34.1 & 36.1 & 34.4 & 37.2 & 36.4 & 42.2 & 34.4 & 33.6 & 45.0 & 52.5 & 37.4 & 33.8 & 37.8 & 25.6 & 27.3 & 36.5          \\
    Liu \etal~\cite{liu2020attention}&
    243 & 1
    & 32.3 & 35.2 & 33.3 & 35.8 & 35.9 & 41.5 & 33.2 & 32.7 & 44.6 & 50.9 & 37.0 & 32.4 & 37.0 & 25.2 & 27.2 & 35.6          \\
    Anatomy3D~\cite{chen_anatomy-aware_2021}&
    243 & 1
    & 32.6 & 35.1 & 32.8 & 35.4 & 36.3 & 40.4 & \underline{32.4} & 32.3 & {42.7} & 49.0 & 36.8 & 32.4 & 36.0 & 24.9 & 26.5 & 35.0          \\
    UGCN~\cite{wang_motion_2020}&
    96 & 1
    & \underline{31.8} & \underline{34.3} & 35.4 & \underline{33.5} & {35.4} & 41.7 & \textbf{31.1} & {31.6}    & 44.4 & 49.0 & 36.4 & \underline{32.2} & \underline{35.0} & 24.9 & \underline{23.0} & 34.5 \\
    MixSTE~\cite{zhang_mixste_2022}&
    243 & 1
    & \textbf{30.8}    & \textbf{33.1} & \textbf{30.3} & \textbf{31.8} & \textbf{33.1} & \textbf{39.1}    & \textbf{31.1}    & \textbf{30.5} & \underline{42.5}    & \textbf{44.5} & \textbf{34.0} & \textbf{30.8} & \textbf{32.7} & \textbf{22.1} & \textbf{22.9}    & \textbf{32.6} \\
 \midrule[0.5pt]
 \ours (Ours)&
 243 & 5
 & 31.9 & 35.7 & \underline{30.8} &  \underline{33.5} & \underline{34.0} & \underline{39.8} & 33.0 & \underline{31.4} & \textbf{41.1} & \underline{45.9} & \underline{36.0} & 32.3 & 35.4 & \underline{24.7} & 25.8 & \underline{34.1} \\
  \bottomrule[1pt]
  \end{tabular}
  }
  \label{table:h36m-p2}
\end{table*}

\begin{figure}[ht]
    \centering
    \includegraphics[width=0.8\columnwidth]{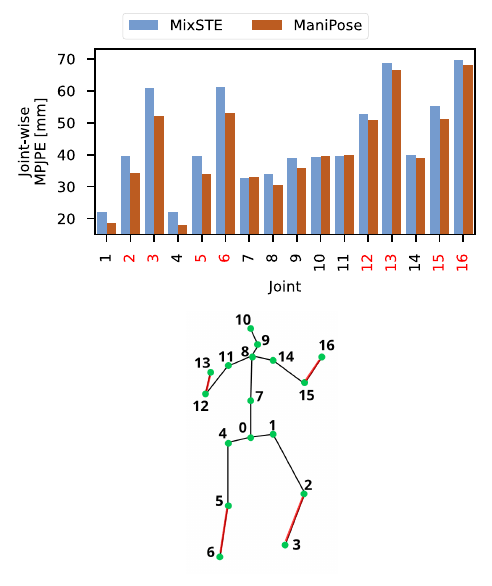}
    \vspace*{-8mm}
    \caption{
    Detailed results on H3.6M. \textbf{Top:} Mean position errors per joint. \textbf{Bottom:} Human\,3.6M skeleton.
    }
    \label{fig:jw_err_seg_sym}
\end{figure}

\myparagraph{Protocol \#1 and \#2 detailed results.}
A detailed quantitative comparison in terms of MPJPE per action on Human3.6M dataset between \ours and state-of-the-art methods is shown in \cref{table:h36m}.
We see that \ours reaches the best MPJPE performance on average and on most actions.
\cref{table:h36m-p2} contains a similar analysis in terms of P-MPJPE (\ie, MPJPE with procrust-aligned poses).
We observe the same patterns as in \cref{table:h36m}, namely that \ours reaches the second-best P-MPJPE performance on average and for most actions.
We confirm here again that the substantial improvements in pose consistency brought by \ours are not obtained at the expense of traditional metrics derived from MPJPE.

\myparagraph{Errors per joint.}
On the top of \cref{fig:jw_err_seg_sym} we see that most of MixSTE errors come from feet, elbows and wrist joints, which are most prone to depth ambiguity.
\ours helps to reduce the position errors for most of those ambiguous joints, probably as a byproduct of its major consistency improvements shown in \cref{fig:cjw_err_seg}.

\myparagraph{Impact of hyperparameters.}
\ours introduces two additional hyperparameters when compared to MixSTE: the number $K$ of hypotheses and the score loss weight $\beta$ (\cf \cref{eq:tot-loss}).
We further assess the impact of their respective values on MPJPE.
For computational cost reasons, we used a smaller version of our model for this study, with transformer blocks of dimension $d_m=64$ and time receptive field of $T=27$ frames.
\begin{figure}[ht]
    \centering
    \includegraphics[width=0.7\columnwidth]{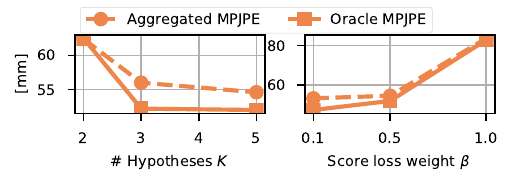}
    \caption{
    \textbf{Impact of the number $K$ of hypotheses (left) and score loss weight $\beta$ (right) on \ours aggregated and oracle performance.} Results are obtained on H3.6M with a smaller network (${d_m=64}$) and a shorter sequence ($T=27$).
    Left plot obtained with $\beta=0.1$ and right plot with $K=5$.
    }
    \label{fig:nhyps_plot}
\end{figure}
%
\cref{fig:nhyps_plot} (left) shows that more hypotheses help, but that the performance improvements saturate around 5 hypotheses.
Concerning $\beta$, \cref{fig:nhyps_plot} (right) shows that lower values help to improve the MPJPE performance.

\myparagraph{Impact of the rotations representations used.}

\CR{The disentanglement between segments' length and orientation is not novel, and was proposed in previous works restricted to the single-hypothesis case, such as Anatomy3D~\cite{chen_anatomy-aware_2021}. While ManiPose represents segments' orientations as full 3D rotations relatively to parent segments in the kinematics tree, Anatomy3D simply predicts segments' absolute directions, \ie, normalized vectors in the 3D space. This solution has the advantage of not over-parametrizing the segments orientations (which are invariant to rotations around the segment axis) and being lower dimensional (3 vs 6). One might hence wonder whether Anatomy3D's parametrization is not preferable.
As shown in \cref{tab:ablation-rot}, Anatomy3D's implementation led to poorer results when compared to our rotations parametrization in a multi-hypothesis setting. This motivated us to use full 3D rotations' representations proposed in \cite{zhou_continuity_2019} in our experiments, despite their caveats. Note that \cite{zhou_continuity_2019} also shows good empirical results in the related problem of inverse kinematics of human 3D poses.}

\begin{table}[ht]
\vspace{-3mm}
\caption{\textbf{Rotations representation ablation: learning 3D directions instead of full rotations yields poorer results.} Dim.: Dimension of rotations or directions representations. $K$: Number of hypotheses. $\beta$ Scores regularization.
\textbf{Bold}: best. \underline{Underlined}: second best. 
}
\centering
\resizebox{0.75\columnwidth}{!}{%
\begin{tabular}{lccccccc}
\toprule
  & Learn
  & Dim.
  & $K$
  & $\beta$
  & MPJPE\,$\downarrow$
  & MPSSE\,$\downarrow$
  & MPSCE\,$\downarrow$ \\ \midrule

\ours (Ours) & Rotations & 6 & 5 &  0.1 & \textbf{39.1} & \textbf{0.3}  & \textbf{0.5}  \\
Anatomy3D-like~\cite{chen_anatomy-aware_2021} & Directions & 3 & 5 & 0.1 & \underline{39.6} & \underline{3.2} & \underline{5.9} \\
 & Directions & 3 & 5 & 0.5 & 41.8 & 3.9 & 6.9 \\
 & Directions & 3 & 3 & 0.5 & 43.2 & 4.4 & 7.5 \\
 \bottomrule
\end{tabular}%
}
  \vspace{1mm}
\label{tab:ablation-rot}
\end{table}

\myparagraph{Diversity of predicted poses.}
\CR{As explained in \cref{sec:comparison-sota}, ManiPose's state-of-the-art oracle MPJPE results show that it excels in terms of diversity when the latter is assessed using the quantization error. There are many other ways of assessing distribution diversity. In an attempt to quantify the diversity of pose distributions learned by ManiPose by other means, we have computed the coverage (as defined in \cite{naeem2020reliable}) of generated poses relatively to the ground-truth test set of Human\,3.6M. For computation cost reasons (it grows quadratically with sample size), we limited our analysis to 5 actions from subject S11. We compare ManiPose to DiffPose~\cite{gong2023diffpose}, using 5 hypotheses for both, and observe similar diversity on average (\cf \cref{fig:coverage}).
}
\begin{figure}[ht]
    \centering
    \includegraphics[width=0.6\linewidth]{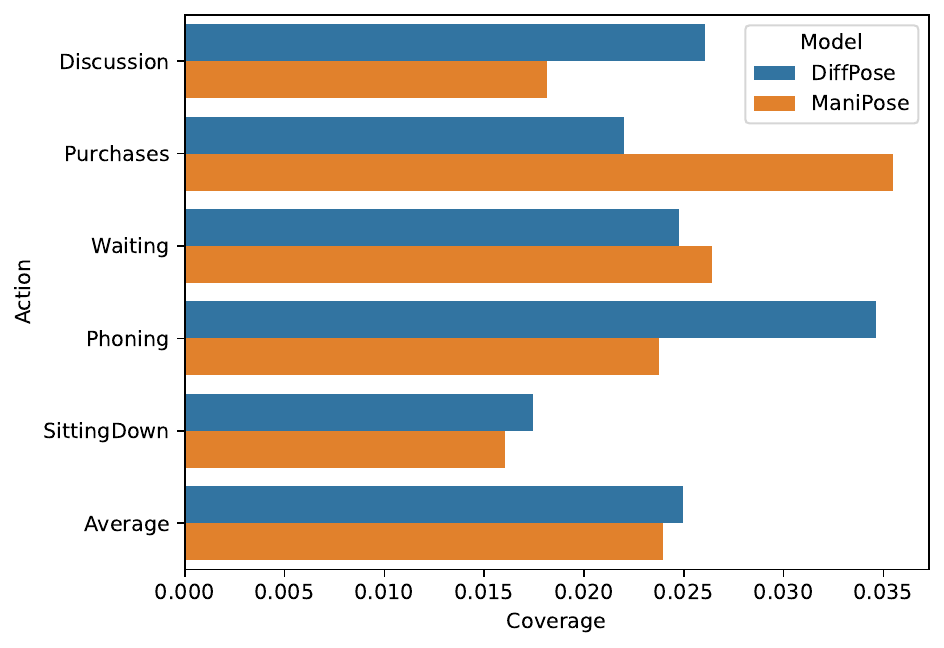}
    \caption{
    \textbf{ManiPose achieves similar diversity to DiffPose~\cite{gong2023diffpose}.} Diversity is assessed through the coverage \cite{naeem2020reliable} over test data from subject 11 from Human\,3.6M. 5 hypotheses were predicted/sampled for each frame by both models.
    }
    \label{fig:coverage}
\end{figure}

\section{Code} \label{app:code}
We provide the code to reproduce all our experiments under \url{https://github.com/cedricrommel/manipose}.


\end{document}